\documentclass[accepted]{uai2026} 
                        
\usepackage[british]{babel}

\usepackage{natbib} 
\usepackage{mathtools} 
\usepackage{booktabs} 

\usepackage{microtype}

\usepackage{amssymb}
\usepackage{amsthm}
\usepackage[tighter]{newtxtext}
\usepackage[slantedGreek]{newtxmath} 

\usepackage[capitalize,noabbrev]{cleveref}
\AddToHook{cmd/appendix/before}{\crefalias{section}{appendix}}

\Crefname{assumption}{Assumption}{Assumptions}

\usepackage{graphicx}
\usepackage{subcaption}
\usepackage{url}
\usepackage{xurl} 
\usepackage{multirow}
\usepackage{enumitem}
\usepackage{pifont} 
\usepackage{cancel}

\usepackage{thmtools}

\theoremstyle{plain}

\theoremstyle{definition}
\declaretheorem[name=Definition]{definition}
\declaretheorem[name=Assumption]{assumption}

\theoremstyle{remark}

\declaretheorem[name=Example]{example}

\usepackage{xcolor}

\usepackage{tikz}
\usetikzlibrary{positioning,graphs,arrows.meta}
\usetikzlibrary{trees} 
\usetikzlibrary{matrix}
\usepackage[normalem]{ulem} 

\usepackage[ruled,vlined,linesnumbered]{algorithm2e}
\SetKwComment{Comment}{/*}{*/}
\SetKwInput{Inv}{Invariant}
\crefname{algocf}{alg.}{algs.}
\Crefname{algocf}{Algorithm}{Algorithms}
\usepackage{float} 
\usepackage{wrapfig}
\usepackage{braket}

\usepackage[acronym]{glossaries}
\usepackage{glossaries-extra}
\setkeys{glslink}{hyper=false}

\newacronym{dag}{DAG}{directed acyclic graph}
\newacronym{ug}{UG}{undirected graph}
\newacronym{pgm}{PGM}{probabilistic graphical model}
\newacronym{ci}{CI}{conditional independence}
\newacronym{gs}{GS}{Grow and Shrink}
\newacronym{mb}{MB}{Markov blanket}
\newacronym{bn}{BN}{Bayesian network}
\newacronym{mrf}{MRF}{Markov random field}

\hypersetup{hidelinks}

\input{math\_commands}

\title{High-Order Markov Blanket Discovery\\
  via a k-Order Relaxation of the Faithfulness Assumption}

\author[1]{\href{mailto:<loong.kuan.lee@iais.fraunhofer.de>?Subject=Your UAI 2026 paper}{Loong~Kuan~Lee}{}}
\author[1]{Ragavi~Krishnamoorthy}
\author[1]{Nico~Piatkowski}
\affil[1]{%
  Hybrid Intelligence\\
  Fraunhofer IAIS\\
  Sankt Augustin, Germany\\
}

\begin{document}
\maketitle

\begin{abstract}
  The problem of learning the graphical Markov blanket (MB) of a
  variable from data has applications in many areas such as structure
  learning for Bayesian networks and Markov random fields, causal
  discovery, and feature selection. However, a common assumption most
  methods make is that the conditional independencies in the
  distribution imply the same separation in the graphical
  structure---also known as the faithfulness assumption. Unfortunately,
  this assumption can be violated by higher-order dependencies such as
  XOR and parity-type relations, and---on finite samples---by empirical
  violations that, in extreme cases, even induce spurious dependencies
  absent from the true distribution. Therefore, in this
  paper we propose a ``k-order'' relaxation of the faithfulness
  assumption that captures parity type relationships between k+2
  variables. We then propose a proof of concept algorithm called k-order
  Markov blanket (kOMB) that uses this relaxation for MB
  discovery. Finally, we empirically show how kOMB can recover the MB of
  a variable under both true and empirical violations of faithfulness.
  Code available at:
  \url{https://github.com/lklee9/k-order-Markov-blanket}.
\end{abstract}

\section{Introduction}\label{sec:intro}
The goal of the \acrfull{mb} discovery problem is to infer the \gls{mb}
of some variable $Y$ in the graph $\gr$ given a dataset $\db$ sampled
from the joint distribution $\pr$.  Originating from the Bayesian
network and causal learning
literature~\citep{margaritis1999,tsamardinos2003,ling2021,hu2024}, the
MB discovery problem has also been discussed and utilised in the feature
selection literature as well~\citep{tsamardinos2003a,wang2020}. More
recently, \gls{mb} discovery has also been employed as a subroutine
within distributed, divide-and-conquer approaches to large-scale
\gls{bn} structure learning~\citep{dong2025a}. That said, we are
interested in inferring \glspl{mb} in both \acrfull{bn} and
\acrfull{mrf}. Therefore, whenever possible we will try to keep our
notation and exposition agnostic to the graph type of $\gr$.

\begin{figure}
  \centering
  \begin{tikzpicture}
  \node[draw, circle, inner sep=2pt] (1) at (-2.5,3) {$X_{1}$};
  \node[draw, circle, inner sep=2pt] (2) at (-1.5,3) {$X_{2}$};
  \node[draw, circle, inner sep=2pt] (3) at (-0.5,3) {$X_{3}$};
  \node[draw, circle, inner sep=3pt] (y) at (-1.5,1.75) {$Y$};
  \draw[-Stealth] (1) -- (y);
  \draw[-Stealth] (2) -- (y);
  \draw[-Stealth] (3) -- (y);
  \node (Z) at (-1.5, 1) {$Z:=\Bigl[\sum_{i=1}^{3}X_{i}=1\Bigr]$};
  \node (prY) at (-1.5, 0.25) {$\pr(Y=Z\mid \rmX)=0.9$};

\node (PrY) at (2.75, 1.75){
      \setlength{\tabcolsep}{3pt}
    \begin{tabular}[t]{lcc}
      \multicolumn{3}{c}{Recall (\%)}\\
      \toprule
      & \multicolumn{2}{c}{Sample Size}\\
      Method & 100 & 1000 \\
      \midrule
      Grow and Shrink & 28.33 & 100\\
      0-OMB (ours)& 28.33 & 100\\
      1-OMB (ours)& 77.50 & 100\\
      \textbf{2-OMB (ours)} & \textbf{100.0} & 100\\
      \bottomrule
    \end{tabular}
};
\end{tikzpicture}
\caption{ \centering Recovering the \gls{mb} of variable $Y$
  when $Y$ is the output of a noisy Boolean function that returns $1$
  when only one of $X_{1}$, $X_{2}$, and $X_{3}$ is $1$. Our algorithm
  that assumes a $2$-order faithfulness relaxation is able to fully
  recover the \gls{mb} of each variable---even with just $100$
  samples.}\label{fig:intro}
\end{figure}

Most existing approaches to MB discovery operate under two main
assumptions. The first being that any separations in $\gr$ imply the
same conditional independence in $\pr$. The second assumption---known as
the faithfulness assumption---goes the opposite direction, that is any
conditional independencies in $\pr$ imply the same separation in $\gr$.

Theoretically, \glspl{bn} with faithfulness violations have Lebesgue
measure zero~\citep{meek1995,spirtes2001,boeken2025}; however these
``typicality'' results depend on somewhat arbitrary choices, like the
choice of $\sigma$-ideal~\citep{boeken2025}. There are well-known
examples of some---very simple---relations that violate
faithfulness. The most famous of which is the (noisy) XOR function for
violations involving three variables~\citep{inazumi2011,marx2021}; which
is a special case of the (noisy) parity function whose higher-order
faithfulness violations we detail in \Cref{ex:parity}.

That said, even if unfaithful \glspl{bn} and \glspl{mrf} are truly
almost non-existent in nature, a lack of sufficient samples can still
cause an empirical faithfulness
violation~\citep{uhler2013,lemeire2012,boeken2025}. For example, the
small example in \Cref{fig:intro} is one such case where the true
\gls{bn} is faithful, but due to a finite sample, empirical faithfulness
violations occur---which can be overcome by the algorithm we will
propose later in \Cref{sec:algo}.  A full walkthrough of empirical
faithfulness violation can be found later in \Cref{ex:sample} under
\Cref{sec:violation}.  Furthermore, in extreme cases, a limited sample
size can cause \gls{mb} discovery algorithms to falsely discover
spurious dependencies absent from the true distribution. We later observe
this phenomenon in \Cref{fig:syn-metrics} under \Cref{sec:exp-syn} with
the low precision exhibited by the different \gls{mb} discovery methods
at low sample sizes.

There are several weaker assumptions compared to faithfulness such as
adjacency faithfulness~\citep{spirtes2001},
Pearl-minimality~\citep{pearl1988,pearl2009},
SGS-minimality~\citep{spirtes2001}, frugality~\citep{forster2018}, and
more recently $2$-adjacency faithfulness~\citep{marx2021}.
$2$-adjacency faithfulness in particular is quite practical as by
limiting faithfulness violations to ``Unfaithful Triples'', it yields an
algorithm with a much smaller search space compared to
frugality. However, although the $2$-adjacency faithfulness can find
XOR-type relations, it is unable to find higher-order parity
relationships between more than $3$ variables.

\paragraph{Contributions}
Therefore, motivated by the considerations above, in \Cref{sec:relax} we
propose a relaxation of faithfulness for parity-like relationships
between $k+2$ variables: the $k$-order faithfulness assumption. This
relaxation of faithfulness allows us to infer---from sample
data---the graphical \acrlong{mb} of a variable under mild
assumptions. We show this in \Cref{sec:algo} by implementing a
proof-of-concept algorithm called the $k$-order Markov blanket
(kOMB). We then empirically show in \Cref{sec:exp} that kOMB can
overcome both true and empirical faithfulness violations and that it
outperforms some existing constraint-based methods on well-known
benchmark datasets---where we find merit in exploring relationships
between $4$ variables at a time, and not just $3$.


\section{Probabilistic Graphical Models and Independencies}
Consider the set of $n$ variables $\verts=\{V_{1},\ldots,V_{n}\}$ with
no other hidden or latent variables. The conditional
independencies between these variables---$\rmY \indep{} \rmX \mid \rmZ$
for disjoint variables sets $\rmY,\rmX,\rmZ\subseteq\verts$---can be
represented by either a \acrfull{dag}~\citep{pearl1988} or an
\acrfull{ug}~\cite{lauritzen1996}.
That said, we will denote both graphs with the tuple
$\gr=(\verts,\edges)$ since we are agnostic on the type of graph used
for representing these graphical conditional independencies
$\indeps{\gr}$.




Regardless of the graph type of $\gr$, for disjoint subsets
$\rmX,\rmY,\rmS\subseteq\verts$, we will denote that $\rmX$ and $\rmY$
are separated in $\gr$ as, $\rmX \indep{\gr} \rmY \mid \rmS$.
Furthermore, we denote the set of conditional independencies encoded by
the graph $\gr$---i.e. the independence model of $\gr$---as,
\begin{equation}
  \label{eq:g-indeps}
  \braket{\rmX,\rmY\mid\rmS} \in \indeps{\gr}
  \iff \rmX \indep{\gr} \rmY \mid \rmS,
\end{equation}
where $\braket{\rmX,\rmY\mid\rmS}$ is known as an independence
statement~\citep{sadeghi2017}.

In addition to the graphical structure $\gr$, we also consider a joint
distribution $\pr$ over $\verts$ that might encode its own set of
conditional independencies between the variables in $\verts$,
\begin{equation}
  \label{eq:4}
  \braket{\rmX,\rmY\mid\rmS} \in \indeps{\pr}
  \iff \rmX \indep{\pr} \rmY \mid \rmS,
\end{equation}
where $\rmX \indep{\pr} \rmY \mid \rmS$ denotes $\rmX$ and
$\rmY$ are probabilistically conditionally independent given $\rmS$.

So far, $\indeps{\gr}$ and $\indeps{\pr}$ are two separate collections
of conditional independencies in the graph $\gr$ and distribution $\pr$
respectively. Our base assumption throughout this paper, the
\emph{Global Markov Property}, links the two by requiring that every
conditional independence encoded by the graph $\gr$ also hold in the
distribution $\pr$. This is a common assumption in the Markov blanket
discovery literature~\cite{schluter2014,yu2020a,marx2021}.

\begin{assumption}[Global Markov Property]\label{def:global-markov}
  Given a \gls{pgm} $\model=(\gr,\pr)$, we say that the distribution
  $\pr$ is \emph{Markov} to the graph $\gr$ when for all disjoint
  subsets
  $\rmX,\rmY,\rmS\subseteq\verts$~\citep{hammersley1971,lauritzen2018},
  \begin{equation}
    \label{eq:5}
    \rmX \indep{\gr} \rmY \mid \rmS \implies
    \rmX \indep{\pr} \rmY \mid \rmS.
  \end{equation}
  Furthermore, by \Cref{eq:g-indeps,eq:4}, the relation in \Cref{eq:5}
  is equivalent to $\indeps{\gr} \subseteq \indeps{\pr}$.
\end{assumption}
Despite their differences, the independence models of $\gr$ and $\pr$
are both known to obey a set of axioms called the (semi)-graphoid
axioms~\citep{pearl1989} (see \Cref{def:graphoid}). As mentioned in
\Cref{sec:intro}, we are interested in learning the graphical
\acrlong{mb} of some $Y\in\verts$ defined as follows.
\begin{definition}[Markov Blanket and Boundary] \label{def:mb} For a
  variable $Y\in\verts$, the \gls{mb} of $Y$ in $\gr$ is any subset
  $\rmS \subseteq \rmV\setminus Y$\footnote{We allow set operations
    between variable sets and single variables to implicitly mean the
    operation between the set and the atomic set of the single
    variable---for example:
    $\rmV \setminus Y \coloneq \rmV \setminus \{Y\}$.}  s.t.
  \begin{equation}
    \label{eq:def-mb}
    Y \indep{\gr} \verts \setminus (Y \cup \rmS) \mid \rmS,
  \end{equation}
  or in other words, every other variable in
  $\verts \setminus (Y \cup \rmS)$ is graphically separated from $Y$ by
  $\rmS$.  The Markov boundary of the variable $Y$---henceforth denoted
  as $\mbs{\gr}(Y)$---is then defined as the minimal Markov blanket of
  $Y$~\citep{pearl1988}.
  
  An equivalent definition for the Markov blanket and boundary of $Y$
  in $\pr$ can be made by just substituting the graph separation term
  in \Cref{eq:def-mb} for conditional independence in $\pr$.
\end{definition}
Therefore, it is more accurate to say that we wish to learn the Markov
boundary of $Y$ and not just its Markov blanket. However, since
colloquially the term Markov blanket is commonly used to implicitly
refer to the Markov boundary, we shall use the term Markov blanket in
such a fashion as well.


Several methods exist for discovering the \acrfull{mb} of a target
variable $Y\in\verts$ in $\gr$ with one of the larger class of methods
being constraint-based approaches~\citep{
  margaritis1999,tsamardinos2003,schluter2014,ling2019}. These
approaches iteratively test conditional independence statements---using
data sampled from $\pr$---to grow and shrink a candidate
\gls{mb} of $Y$. However, the \gls{mb} of $Y$ in
the joint distribution $\pr$ might not be the same as its \gls{mb}
in the graph $\gr$. Following the Global Markov Property in
\Cref{def:global-markov}, we at know that the \gls{mb} of
$Y$ in $\pr$ is a subset of the \gls{mb} in $\gr$---which this
might prevent us from fully recovering the \gls{mb} in
$\gr$. Therefore, in order to ensure that the Markov blankets of $Y$ in
both $\pr$ and $\gr$ are equal, a common assumption constraint-based
approaches make is that $\gr$ is \emph{faithful} to the joint
distribution $\pr$ as defined in \Cref{def:faith}~\citep{spirtes2001}.

\begin{definition}[Faithfulness]\label{def:faith}
  Given a \gls{pgm} $\model=(\gr, \pr)$, the graph
  $\gr$ is \emph{faithful} to the joint distribution $\pr$ if for all
  disjoint subsets $\rmX,\rmY,\rmS\subseteq\verts$~\citep{spirtes2001},
  \begin{equation}
    \label{eq:16}
    \rmX \indep{\pr} \rmY \mid \rmS
    \implies \rmX \indep{\gr} \rmY \mid \rmS
  \end{equation}
  which is basically the reverse direction of the global Markov property
  in \Cref{def:global-markov}.
\end{definition}

Therefore, by assuming both the Global Markov Property faithfulness, the
conditional independencies of both $\gr$ and $\pr$ then coincide,
$\indeps{\gr}=\indeps{\pr}$. However faithfulness can be violated in
many ways and we will explore some of them in the following section.

\section{Faithfulness Violations and Existing Relaxations}\label{sec:violation}
Probably the most well-known example of a faithfulness violation is the
(noisy) XOR function $Y\coloneq X_{1}\oplus X_{2}$ with the graphical
structure
$X_{1}\rightarrow Y \leftarrow
X_{2}$~\citep{inazumi2011,marx2021}\footnote{The (noisy) XOR function is
a special case of the (noisy) parity function whose faithfulness
violation we detail in \Cref{ex:parity}.}.
A
recent faithfulness relaxation for tackling XOR-type relations that is
of particular interest to us is $2$-adjacency
faithfulness~\citep{marx2021}.
\begin{definition}[2-adjacency faithfulness]
  Given the \gls{pgm} $\model=(\gr,\pr)$, $\gr$ is $2$-adjacent faithful
  to $\pr$ if for all adjacent variables $(X,Y)\in\edges$, $\exists \rmX
  \subseteq \mbs{\gr}(Y)$ where $X\in\rmX$ s.t. $\forall X \in \rmX: X
  \dep{\pr} Y \mid \rmX \setminus X$ and if $|\rmX|=2$,
  $X_{1}\dep{\pr}X_{2} \mid Y$.
\end{definition}
That said XOR-type relationships only involve three variables. Since we
are interested in higher-order unfaithfulness, we will use the
noisy parity function as a primary example of the type of faithfulness
violation we are interested in.
\begin{figure}
  \centering
  \begin{tikzpicture}
  \node[draw, circle, inner sep=2pt] (1) at (-1,1.85) {$X_{1}$};
  \node[draw, circle, inner sep=2pt] (2) at (0,1.85) {$X_{2}$};
  \node[draw, circle, inner sep=2pt] (3) at (1,1.85) {$X_{3}$};
  \node[draw, circle, inner sep=3pt] (y) at (0,0.6) {$Y$};
  \draw[-{Stealth}] (1) -- (y);
  \draw[-{Stealth}] (2) -- (y);
  \draw[-{Stealth}] (3) -- (y);
\node (PrX1) at (0, -0.75){
    \begin{tabular}[t]{|cc|}
      \hline
      \multicolumn{2}{|c|}{$\forall i \in [3]$}\\
      $\pr_{X_{i}}(0)$ & $\pr_{X_{i}}(1)$\\
      \hline
      0.5 & 0.5\\
      \hline
    \end{tabular}
};

\node (PrY) at (-3.5, 0.5){
    \begin{tabular}[t]{|ccc|c|}
      \hline
      $X_{1}$ & $X_{2}$ & $X_{3}$ & $\pr_{Y}(1)$\\
      \hline
      0 & 0 & 0 & 0.1\\
      0 & 0 & 1 & 0.9\\
      0 & 1 & 0 & 0.9\\
      0 & 1 & 1 & 0.1\\
      1 & 0 & 0 & 0.9\\
      1 & 0 & 1 & 0.1\\
      1 & 1 & 0 & 0.1\\
      1 & 1 & 1 & 0.9\\
      \hline
    \end{tabular}
};
\end{tikzpicture}

  \caption{Example of a noisy parity \gls{bn} over $4$ variables.}
  \label{fig:parity}
\end{figure}
\begin{example}[Noisy Parity Function]\label{ex:parity}
  Let $\model=(\gr,\pr)$ be a \gls{pgm} where
  $\rmX=\{X_{1},X_{2},X_{3}\}$, $\verts=Y\cup\rmX$, and $\gr$ has the
  structure of the \gls{dag} in \Cref{fig:parity}. Each $X\in\rmX$ are
  independent of each other and has the distribution of a fair coin. The
  distribution of $Y$ conditioned on $\rmX$ is then
  $\pr(Y=f(\rvx)\mid\rmX=\rvx) = 0.9$, where,
  $f(\rvx) \equiv \sum_{i=1}^{n}x_{n}\pmod{2}$. For any random variable
  $X\in\rmX$, we know that $\pr(Y,X)=0.25=0.5\times 0.5=\pr(Y)\pr(X)$
  for any value $Y$ and $X$ takes. Therefore it must the case that
  $Y \indep{\pr} X$, but since $Y$ and $X$ are not separable in $\gr$,
  $Y\dep{\gr} X$; $\gr$ is not faithful to $\pr$.  Additionally for any
  distinct random variables $X_{i},X_{j}\in\rmX$,
  $\pr(Y,X_{i},X_{j})=1/8=0.5^{3}=\pr(Y)\pr(X_{i})\pr(X_{j})$ for any
  values of $Y,X_{i},X_{j}$. Therefore we know that
  $Y\indep{\pr}\{X_{i},X_{j}\}$, $X_{i}\indep{\pr}\{Y,X_{j}\}$, and
  $X_{j}\indep{\pr}\{X_{i},Y\}$. Since $Y$ is adjacent to all
  $X\in\rmX$, but there is no $2$-cardinality subsets $\rmX'\subset\rmX$
  that makes $Y\dep{\pr}\rmX'$; $\gr$ is not $2$-adjacent faithful to
  $\pr$ following the definition in \citet[Definition 7]{marx2021}.
\end{example}

Although we now know of a \gls{bn} that violates both faithfulness and
$2$-adjacency faithfulness, these types of relationships between
variables might very well not be common in nature. In fact, it has been
shown that \emph{unfaithful} \glspl{bn} has Lebesgue measure
zero~\citep{meek1995,spirtes2001,boeken2025}. However, empirical
violations of faithfulness due to limited samples can be quite
common~\citep{uhler2013,lemeire2012,boeken2025}. Therefore in
\Cref{ex:sample}, we will present a scenario where, even though the true
\gls{bn} is faithful; due to sampling error, the \gls{dag} of the
\gls{bn} is not faithful to the empirical distribution obtained from the
sample.
  
\begin{example}\label{ex:sample}
  Similar to \Cref{ex:parity}, let $\model=(\gr,\pr)$ be a \gls{pgm}
  with random variables $\rmX$ and $Y$ with the \gls{dag} structure in
  \Cref{fig:sample}. The distribution of $Y$ conditioned on $\rmX$ is
  then
  $\pr\bigl(Y=\bigl[\sum_{X\in\rmX}X=1\bigr] \bigm| \rmX \bigr) = 0.9$,
  where $[\cdot]$ is the Iverson bracket notation for the indicator
  function. In other words, $Y=1$ has a probability of $0.9$ when only
  one of the variables in $\rmX$ is $1$. Otherwise, $Y=0$ has a
  probability of $0.9$.

  The marginal distribution of $Y$ is $\pr(Y=1)=3/8$ since only 3
  value-combinations of $\rmX$ has just a single $1$ among them. We then
  know that
  $\forall X \in \rmX : \pr(Y=1,X=1)=1/8\neq
  (3/8)\times(4/8)=\pr(Y=1)\pr(X=1)$. Therefore, $\forall X\in\rmX : Y
  \dep{\pr} X$ and consequently, $\gr$ \emph{is} faithful to $\pr$.

  However, it is possible for a sample $\db$ of $\pr$ to be pathological
  in the sense that, the empirical distribution from $\db$ induces a
  independence between $Y$ and some $X\in\rmX$. For example, the sample
  in \Cref{fig:sample} causes the marginal distribution of $Y$ to be
  $\pre(Y)=1/2$ and the marginal distribution over $Y$ and $X_{1}$ to be
  $\pre(Y,X_{1})=1/4=(1/2)\times(1/2)=\pre(Y)\pre(X_{1})$ for all
  possible values of $Y$ and $X_{1}$. Therefore under this empirical
  distribution $\pre$, $Y\indep{\pre}X_{1}$---which empirically violates
  the faithfulness assumption.
\end{example}

Fortunately in \Cref{ex:sample}, the \gls{dag} in \Cref{fig:sample} is
still $2$-adjacent faithful to the empirical distribution, therefore the
full \gls{mb} is still recoverable by the algorithm outlined in
\citet{marx2021}. However, this is not always the case and as we will
see in \Cref{sec:exp-syn}, sometimes considering higher-order
relationships between variables can help better overcome empirical
faithfulness violations.

\begin{figure}
  \centering
  \begin{tikzpicture}
  \node[draw, circle, inner sep=2pt] (1) at (-2.5,2.75) {$X_{1}$};
  \node[draw, circle, inner sep=2pt] (2) at (-1.5,2.75) {$X_{2}$};
  \node[draw, circle, inner sep=2pt] (3) at (-0.5,2.75) {$X_{3}$};
  \node[draw, circle, inner sep=3pt] (y) at (-1.5,1.5) {$Y$};
  \draw[-Stealth] (1) -- (y);
  \draw[-Stealth] (2) -- (y);
  \draw[-Stealth] (3) -- (y);
  \node (Z) at (-1.5, 0.5) {$g(\rmX):=\Bigl[\sum_{X\in\rmX}X=1\Bigr]$};
  \node (prY) at (-1.5, -0.4) {$\pr\bigl(Y=g(\rmX)\mid \rmX\bigr)=0.9$};

\node (samp) at (2, 1.4){
    \begin{tabular}[t]{|ccc c|}
      \multicolumn{4}{c}{Pathological $\mathcal{D}$}\\
      \hline
      $X_{1}$ & $X_{2}$ & $X_{3}$ & $Y$\\
      \hline
      0 & 0 & 0 & 0\\
      0 & 0 & 1 & 1\\
      0 & 1 & 0 & 1\\
      0 & 1 & 1 & 0\\
      1 & 0 & 0 & 1\\
      1 & 0 & 1 & 0\\
      1 & 1 & 0 & 0\\
      1 & 0 & 0 & 1\\
      \hline
    \end{tabular}
};


\end{tikzpicture}

  \caption{Example of a noisy exactly-1 \gls{bn} over $4$ variables with
  a pathological sample that causes the \gls{dag} to be unfaithful to
  the empirical distribution.}
  \label{fig:sample}
\end{figure}

\section{The k-Order Faithfulness Relaxation}\label{sec:relax}
In this section we will introduce our generalisation of the faithfulness
assumption in \Cref{def:faith} for finding conditional dependencies
between two variables in $\verts$ that require considering $k$
additional variables to be found. But before that, we first define in
\Cref{def:k-dep} what it means for two distinct variables $Y,X\in\verts$
to only be conditionally dependent given some $k$-cardinality subset
$\rmZ\subseteq\subverts, |\rmZ| = k$.

\begin{definition}[$k$-Order Dependence]\label{def:k-dep}
  Given a \gls{pgm} $\model=(\gr, \pr)$, let $Y,X\in\rmV$ be two
  distinct random variables and $\rmS,\rmZ\subseteq\subverts$ be two
  disjoint subsets. Then we denote the absence of any subset of
  $\rmS'\subseteq\rmS$ that renders $Y$ and $X$
  conditionally independent given $\rmZ \cup \rmS'$ as
  \begin{equation}
    \label{eq:k-dep-asc}
    \kassoc{\rmS}{Y,X}{\rmZ}
    := \forall \rmS' \subseteq \rmS :
    Y \dep{\pr} X \mid \rmZ \cup \rmS'.
  \end{equation}
  Here we say that $Y$ and $X$ are $|\rmZ|$-order associated
  w.r.t. $\rmZ$ over the ``separating'' set $\rmS$.
  Of course under this definition there might be some proper subset
  $\rmZ' \subset \rmZ$ where $\kassoc{\rmS}{Y,X}{\rmZ'}$ still
  holds. Therefore we say $Y$ and $X$ are $|\rmZ|$-order
  \emph{dependent} on $\rmZ$ only if this is not the case,
  \begin{gather}
    \begin{aligned}
      &\kdep{\rmS}{Y,X}{\rmZ}\\
      &:= \kassoc{\rmS}{Y,X}{\rmZ} \wedge
        \forall \rmZ' \subset \rmZ : \neg\,\kassoc{\rmS}{Y,X}{\rmZ'}.
    \end{aligned}
    \label{eq:k-dep-dep}
  \end{gather}
\end{definition}
With this definition of $k$-order dependence, we are
able to capture parity-like relations over $k+2$ variables---such as the
one in \Cref{ex:parity}. Specifically we know that the noisy-parity
relationship in \Cref{ex:parity} is $2$-order dependent since for
distinct variables $A,B\in\{X_{1},X_{2},X_{3},Y\}$, $\rmZ$ must always
contain the other two variables for $\kassoc{\emptyset}{A,B}{\rmZ}$ to
be true.

Further note that when $k=1$, our notion of $k$-order dependence---
similar to \emph{$2$-association} in \citet[Definition 6]{marx2021}---is
able to capture the relationships between \emph{Unfaithful Triples}
which includes XOR-type relationships.

There are two aspects of \Cref{def:k-dep} whose purpose might not
immediately jump out. The first being that in \Cref{eq:k-dep-asc}, we
require $Y\dep{\pr}X\mid\rmZ\cup\rmS'$ to be true for all subsets
$\rmS'\subseteq\rmS$---instead of just
$Y\dep{\pr}X\mid\rmZ\cup\rmS$. This requirement ensures that $Y$ and $X$
are dependent as long as they are conditioned on $\rmZ$---forcing a
distinction between the possible dependants $\rmZ$ and the possible
separators $\rmS$. The other odd aspect of \Cref{def:k-dep} is that we
define both $k$-order association and dependence over some arbitrary
subset $\rmS\subseteq\subverts$ instead of just the entirety of
$\subverts$. This allows us to reason about associations relative to
the candidate Markov blanket maintained by our algorithm, which is
central to proving in \Cref{sec:algo} that {\method} correctly recovers
the Markov boundary of some variable $Y\in\verts$.

Since henceforth we need to reason about subsets with a restricted
cardinality; before continuing we will first define some notational
shorthand for dealing with such sets.

\begin{definition}[Power Sets of Limited
  Cardinality]\label{def:powerset}
  Let $\verts$ be a set of elements and
  $k\in\mathbb{N} : k\leq |\verts|$ be some natural number\footnote{We
    adopt the definition of the natural numbers in which $0$ is
    included.} smaller than the cardinality of $\verts$. Then we use,
  \begin{equation}
    \label{eq:17}
    \powereq{k}{\verts} := \{\rmZ \subseteq \verts : |\rmZ| = k\},
  \end{equation}
  to denote the set of all subsets of $\verts$ with cardinality $k$
  and,
  \begin{equation}
    \label{eq:18}
    \powerleq{k}{\verts} := \{\rmZ \subseteq \verts : |\rmZ| \leq k\},
  \end{equation}
  to denote the set of all the subsets of $\verts$ with cardinality less
  than or equal to $k$.
\end{definition}

Before introducing our generalisation of the faithfulness
assumption to $k$-order dependencies, we first show in
\Cref{lem:k-dep-to-assoc} that $k$-order associations w.r.t. $\rmZ$
implies the existence of a $k$-order dependence w.r.t. some subset of
$\rmZ$.

\begin{restatable}[$k$-Order Dependence Arises from Association]{lemma}{kdeptoassoc}
  \label{lem:k-dep-to-assoc}
  Given a \gls{pgm} $\model=(\gr,\pr)$, let $Y,X\in\rmV$ be
  distinct random variables and $\rmS,\rmZ\subset\subverts$ be disjoint
  subsets. Then the following holds for all $Y,X,\rmS,\rmZ$:
  \begin{equation}
    \label{eq:11}
    \kassoc{\rmS}{Y,X}{\rmZ}
    \implies \exists \rmZ' \subseteq \rmZ :
    \kdep{\rmS}{Y,X}{\rmZ'}.
  \end{equation}
\end{restatable}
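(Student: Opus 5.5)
The plan is a minimality argument over the finite poset of subsets of $\rmZ$ ordered by inclusion. Fix $Y,X,\rmS,\rmZ$ with $\kassoc{\rmS}{Y,X}{\rmZ}$. Consider the family
\begin{equation*}
  \mathcal{A} := \{\rmZ'' \subseteq \rmZ : \kassoc{\rmS}{Y,X}{\rmZ''}\}.
\end{equation*}
By hypothesis $\rmZ\in\mathcal{A}$, so $\mathcal{A}$ is non-empty. It is also finite, since $\rmZ\subseteq\subverts$ and $\verts$ is a finite set of $n$ variables. Hence $\mathcal{A}$ has an element that is minimal with respect to set inclusion. For instance, take any $\rmZ'\in\mathcal{A}$ of smallest cardinality; this is well defined because cardinalities are natural numbers bounded by $|\rmZ|$.

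Next I check that such a $\rmZ'$ satisfies $\kdep{\rmS}{Y,X}{\rmZ'}$ as in \Cref{eq:k-dep-dep}. The first conjunct, $\kassoc{\rmS}{Y,X}{\rmZ'}$, holds because $\rmZ'\in\mathcal{A}$. For the second conjunct, let $\rmZ'''\subset\rmZ'$ be a proper subset. Then $\rmZ'''\subseteq\rmZ$, and $|\rmZ'''|<|\rmZ'|$. If $\kassoc{\rmS}{Y,X}{\rmZ'''}$ held, then $\rmZ'''$ would lie in $\mathcal{A}$ with smaller cardinality than $\rmZ'$, contradicting the choice of $\rmZ'$. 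So $\neg\,\kassoc{\rmS}{Y,X}{\rmZ'''}$ holds for every $\rmZ'''\subset\rmZ'$, which is exactly the minimality clause of \Cref{eq:k-dep-dep}.

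Some side conditions in \Cref{def:k-dep} must also hold for $\rmZ'$. The definition requires $\rmS$ and $\rmZ'$ to be disjoint subsets of $\subverts$. This is inherited from $\rmZ'\subseteq\rmZ$ and the disjointness of $\rmS$ and $\rmZ$. The degenerate case $\rmZ'=\emptyset$ is allowed: the minimality clause then quantifies over no proper subsets and holds vacuously.

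I do not expect a genuine obstacle. The only point needing care is to use cardinality-minimality, or equivalently finiteness, rather than an unjustified infinite descent. An alternative is an induction on $|\rmZ|$: if $\rmZ$ itself is not $k$-order dependent, some proper subset is associated, and the induction hypothesis applies to it. I would present the direct minimal-element argument above since it is shorter.
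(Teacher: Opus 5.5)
Your proposal is correct and follows essentially the same route as the paper's proof. The paper also takes the finite, nonempty family of association witnesses $\rmW\subseteq\rmZ$ and picks one of minimum cardinality; any proper subset of that witness would be a smaller witness, which gives the minimality clause of \Cref{eq:k-dep-dep}.
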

The proof of \Cref{lem:k-dep-to-assoc} and all other Lemmas in the paper
can be found in \Cref{apx:proofs}.


\begin{assumption}[$k$-Order Faithfulness]\label{def:k-faith}
  Given the \gls{pgm} $\model=(\gr, \pr)$, $\gr$ is $k$-order faithful
  to $\pr$ if
  $\forall \rmS \subseteq \verts\setminus\{Y,X\}$,
  \begin{gather}
    \label{eq:k-faith-indep}
    \begin{aligned}
      &\forall \rmZ \in \power{\leq k}(\verts\setminus\{Y,X\})\:\:
        \exists \rmS' \subseteq \rmS :
      Y \indep{\pr} X \mid \rmZ \cup \rmS'\\
      &\implies
      Y \indep{\gr} X \mid \rmS.
    \end{aligned}
  \end{gather}
  Its contrapositive then states
  that $\forall \rmS \subseteq \verts\setminus\{Y,X\}$,
  \begin{gather}
    \label{eq:k-faith-1}
    \begin{aligned}
      &Y \dep{\gr} X \mid \rmS\\
      &\implies
      \exists \rmZ \in \power{\leq k}(\verts\setminus\{Y,X\}) :
        \kassoc{\rmS}{Y,X}{\rmZ}.
    \end{aligned}
  \end{gather}
  However, the definition in \Cref{eq:k-faith-1} allows $k$ to be larger
  than it needs to be---i.e. $k$ is not strict. Therefore, we say that
  $\gr$ is \emph{strictly} $k$-order faithful to $\pr$ if
  $\forall \rmS \subseteq \verts\setminus\{Y,X\}$,
  \begin{gather}
    \label{eq:k-faith-2}
    \begin{aligned}
      &Y \dep{\gr} X \mid \rmS\\
      &\implies
      \exists \rmZ \in \powereq{k'}{\verts\setminus\{Y,X\}} :
      \kdep{\rmS}{Y,X}{\rmZ},
    \end{aligned}
  \end{gather}
  where $k'\leq k$. This definition in \Cref{eq:k-faith-2} logically
  follows from \Cref{eq:k-faith-1} as a result of
  \Cref{lem:k-dep-to-assoc}.
\end{assumption}

With this new generalisation of faithfulness, we will now
present an initial algorithm, {\method}, that makes use of our
$k$-order faithfulness assumption for \glspl{mb} discovery.

\section{kOMB: k-Order Markov Blanket}\label{sec:algo}
Our algorithm for \acrfull{mb} discovery, {\method}, is
a modified version of the \acrfull{gs} algorithm by
\citet{margaritis1999}. In \gls{gs}, a candidate \gls{mb} $\rmS$ is
iteratively grown by testing the conditional independence between $Y$
and variables that are not yet in $\rmS$---conditioned the current
candidate \gls{mb} $\rmS$. Therefore, the separating set used in the
conditional independence tests change as the \gls{gs} algorithm
progresses.

In {\method}, something similar occurs where we will only consider
subsets of the candidate \gls{mb} when trying to find subsets that
successfully cause $Y$ and $X$ to be conditionally independent. In other
words, instead of considering the dependence between $Y$ and $X$ over
all the variables in $\verts$, $\kdep{\subverts}{Y,X}{\rmZ}$; we will
instead consider the association between $Y$ and $X$ over just the
current candidate \gls{mb} $\rmS$, $\kassoc{\rmS}{Y,X}{\rmZ}$. As a
first step, we show that a $k$-order dependence between $Y$ and $X$ with
no separating variables to consider implies that $Y$ is $k$-order
associated with every one of the involved variables.

\begin{restatable}[Marginal $k$-Order Dependence Implies Inter-Association]{lemma}{kdepinter}
  \label{lem:k-dep-inter}
  Given a \gls{pgm} $\model=(\gr, \pr)$, let $X,Y\in\verts$ be distinct
  variables and $\rmZ\subseteq\subverts$. Then if $Y$ and $X$ are
  $|\rmZ|$-order dependent w.r.t. $\rmZ$ over the empty separating set,
  $Y$ is $|\rmZ|$-order associated with every variable in $\rmZ\cup X$,
  \begin{gather}
    \kdep{\emptyset}{Y,X}{\rmZ}
    \implies
    \kinter{\emptyset}{Y,\rmZ \cup X},\label{eq:k-dep-inter-2}
    \intertext{where}
    \begin{aligned}
      \kinter{\rmS}{Y,\rmW} := \forall W\in\rmW :
      \kassoc{\rmS}{Y,W}{\rmW\setminus W}.
    \end{aligned}
  \end{gather}
  When \Cref{eq:k-dep-inter-2} is true, we say that $Y$ is
  $|\rmZ|$-order \emph{inter-associated} with $X\cup\rmZ$.
\end{restatable}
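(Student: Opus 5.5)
The plan is a short argument by contradiction, using only the semi-graphoid axioms (\Cref{def:graphoid}), which every distribution $\pr$ satisfies. No positivity or intersection property should be needed.

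First I unfold the definitions with $\rmS=\emptyset$. The only subset of $\emptyset$ is $\emptyset$ itself, so $\kassoc{\emptyset}{Y,X}{\rmZ}$ reduces to $Y \dep{\pr} X \mid \rmZ$. Likewise $\neg\,\kassoc{\emptyset}{Y,X}{\rmZ'}$ reduces to $Y \indep{\pr} X \mid \rmZ'$. Hence the hypothesis $\kdep{\emptyset}{Y,X}{\rmZ}$ says exactly two things:
\begin{itemize}
\item $Y \dep{\pr} X \mid \rmZ$;
\item $Y \indep{\pr} X \mid \rmZ'$ for every proper subset $\rmZ' \subset \rmZ$.
\end{itemize}
The goal $\kinter{\emptyset}{Y,\rmZ\cup X}$ says that $Y \dep{\pr} W \mid (\rmZ\cup X)\setminus W$ for every $W \in \rmZ \cup X$.

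Next I split on the choice of $W$.
\begin{itemize}
\item For $W = X$ the required statement is $Y \dep{\pr} X \mid \rmZ$. This is the first part of the hypothesis, so nothing more is needed.
\item For $W = Z_i \in \rmZ$ the required statement is $Y \dep{\pr} Z_i \mid (\rmZ\setminus Z_i) \cup X$. Suppose, for contradiction, that $Y \indep{\pr} Z_i \mid (\rmZ\setminus Z_i)\cup X$. Since $\rmZ\setminus Z_i$ is a proper subset of $\rmZ$, the second part of the hypothesis gives $Y \indep{\pr} X \mid \rmZ\setminus Z_i$.
\end{itemize}

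In the second case I combine the two independences. Contraction applied to them gives $Y \indep{\pr} \{X, Z_i\} \mid \rmZ\setminus Z_i$. Weak union, moving $Z_i$ into the conditioning set, then gives $Y \indep{\pr} X \mid \rmZ$. This contradicts the first part of the hypothesis, so every $Z_i$ satisfies the required dependence and the lemma follows.

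The only step needing care is applying contraction in the right orientation. The conditioning sets must line up as $\rmZ\setminus Z_i$ and $(\rmZ\setminus Z_i)\cup X$, which they do here. Beyond that I foresee no obstacle. The degenerate case $\rmZ=\emptyset$ is trivial, since only $W=X$ occurs.
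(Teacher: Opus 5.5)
Your proposal is correct and matches the paper's proof step for step. The case $W=X$ is the first conjunct of $k$-order dependence. For $Z\in\rmZ$, you pair the assumed independence with the second conjunct at $\rmZ\setminus Z$, then apply contraction followed by weak union to reach $Y\indep{\pr}X\mid\rmZ$, which contradicts the hypothesis.
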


\Cref{lem:k-dep-inter} requires no faithfulness assumption---it holds
for every distribution via the semi-graphoid axioms---and generalises
the $2$-association property of unfaithful triples in
\citet[Definition 6]{marx2021} to arbitrary orders; the noisy-parity
\Cref{ex:parity} is exactly this case with $\rmZ=\{X_{2},X_{3}\}$. It
motivates the \textit{make\_strict} routine of \Cref{alg:assoc-mine},
which prunes a found dependence set towards an inter-associated one.

The last consideration we need to make for {\method} to be remotely
feasible---especially in cases where the true \gls{mb} of $Y\in\verts$
in $\gr$ is large---is that, we need to somehow limit the size of the
conditioning set when conducting the \acrfull{ci} tests in
{\method}. When testing if $Y$ and $X$ are $k$-order associated, we will
generally use \gls{ci} tests of the form,
$\forall \rmS' \subseteq \rmS: Y \indep{\pr} X \mid \rmZ \cup \rmS'$,
where $\rmZ$ is the ``dependence'' set that renders $Y$ and $X$
conditionally dependent.  Fortunately, assuming $k$-order faithfulness,
we know that the size of $\rmZ$ is bounded by $k$.

However the same cannot be said for the set $\rmS'$ as it can be as
large as the candidate \gls{mb} $\rmS$--which can be arbitrarily
large depending on $\gr$. Therefore, the third and last assumption we
will make is that if $Y$ and $X$ are conditionally independent given
$\rmZ\cup\rmS$, then there must exist some subset with size less than
$l$, $\rmS'\in\powerleq{l}{\rmS}$, such that $Y$ and $X$ are still
conditionally independent given $\rmZ\cup\rmS'$. 

\begin{assumption}[$l$-Bounded Separator
  Assumption]\label{def:bound-sep}
  For a \gls{pgm} $\model=(\gr,\pr)$, distinct variables $Y,X\in\verts$,
  and $\verts'\coloneq \subverts$, let us first recall the $k$-order
  faithfulness assumption as defined in \Cref{eq:k-faith-indep}
  $\forall\rmS \subseteq \subverts$,
  \begin{gather}
    \tag{\ref{eq:k-faith-indep}}
    \begin{aligned}
      &\forall \rmZ \in \power{\leq k}(\verts')\:\:
        \exists \rmS' \subseteq \rmS :
      Y \indep{\pr} X \mid \rmZ \cup \rmS'\\
      &\implies
      Y \indep{\gr} X \mid \rmS.
    \end{aligned}
  \end{gather}
  Then, for some natural number $l\in\mathbb{N}$, we define an
  $l$-bounded separator version of the $k$-order faithfulness assumption
  $\forall\rmS \subseteq \subverts$,
  \begin{gather}
    \begin{aligned}
      &\forall \rmZ \in \power{\leq k}(\verts')\:\:
        \exists \rmS' \in \powerleq{l}{\rmS}:
        Y \indep{\pr} X \mid \rmZ \cup \rmS'\\
      &\iff
        \forall \rmZ \in \power{\leq k}(\verts')\:\:
        \exists \rmS' \subseteq \rmS :
      Y \indep{\pr} X \mid \rmZ \cup \rmS'\\
      &\implies
      Y \indep{\gr} X \mid \rmS,
    \end{aligned}
  \end{gather}
  which essentially states that as long as $Y$ and $X$ are conditionally
  independent given $\rmZ$ and every $\leq l$-cardinality subset of
  $\rmS$, then $Y$ and $X$ are conditionally independent given $\rmZ$
  and every possible subset of $\rmS$---regardless of cardinality.

  The contrapositive of this assumption is then
\begin{gather}
  \begin{aligned}
    &Y \dep{\gr} X \mid \rmS\\
    &\implies
      \exists \rmZ \in \power{\leq k}(\verts')\:\:
      \forall \rmS' \subseteq \rmS:
      Y \dep{\pr} X \mid \rmZ \cup \rmS'\\
    &\iff
      \exists \rmZ \in \power{\leq k}(\verts') :
      \kassoc{\rmS}{Y,X}{\rmZ}\\
    &\iff \exists \rmZ \in \powerleq{k}{\verts'} :
    \lassoc{\rmS}{Y,X}{\rmZ},
  \end{aligned}
\end{gather}
where
$\lassoc{\rmS}{Y,X}{\rmZ} \coloneq \forall \rmS' \in
\powerleq{l}{\rmS}: Y \dep{\pr} X \mid \rmZ \cup \rmS'$.
\end{assumption}

The concept of bounding the maximum size of the separators in a graph
has been explored before. For instance, \citet{soh2019} defined an
undirected graph $\gr$ to be weakly $K$-separable if for any two
non-adjacent vertices in $\gr$, there exists a set of vertices with
cardinality $\leq K$ that separates the two vertices in
$\gr$. However, instead of focusing on separation in the graph $\gr$,
our $l$-bounded separator assumption focuses on the cardinality of
``separating'' sets that causes conditional independencies between
variables in $\verts$.

The $l$-bounded separator assumption is also analogous to assumptions
that existing constraint-based \gls{mb} discovery methods make about the
size of the conditioning sets needed to decide conditional independence
reliably from finite samples~\citep{
  margaritis1999,tsamardinos2003,aliferis2003,tsamardinos2003b}.
\Cref{def:bound-sep} simply makes such a bound explicit.  Furthermore,
as shown later in \Cref{apx:dsep}, the bound is mild on the networks
used in \Cref{sec:exp-real}.

\subsection{Theoretical Algorithm}
\begin{algorithm}[t]
  \caption{kOMB}\label{alg:komb}
  \SetKwProg{Fn}{Function}{:}{}
  \KwIn{$\db$, $\model$, $Y$, $\alpha$, $k$, $l$}
  \KwOut{$\mbs{\gr}(Y)$}
  $\rmS \gets \emptyset$\tcp*[r]{Candidate Markov Blanket}
  \Repeat
  {$\rmX' = \emptyset$}{
    $\verts' \gets \verts \setminus (Y \cup \rmS)$\;
    $\rmX' \gets $ find\_inter\_dep($Y$, $\emptyset$, $\rmS$, $\verts'$, $k$, $l$)\;
    $\rmS \gets \rmS \cup \rmX'$
  }
  \While{$\exists X \in \rmS : \textit{find\_cond}(Y, X,
    \emptyset, \rmS\setminus X, k, l) = \bot$}{
    $\rmS \gets \rmS \setminus X$\;
  }
  \Return{$\rmS$}\;
\end{algorithm}

As a proof-of-concept, in this section we propose {\method}---a
modification of the \acrfull{gs} by \citet{margaritis1999} to discover
Markov blankets with higher-order relationships under the following
assumptions: 1) the Global Markov Property from
\Cref{def:global-markov}, 2) $k$-order faithfulness as in
\Cref{def:k-faith}, and 3) the $l$-bounded separator assumption in
\Cref{def:bound-sep}.

The original \gls{gs} algorithm starts with an empty \gls{mb}
$\rmS=\emptyset$ and iteratively adds a variable $X$ into $\rmS$ if the
\acrfull{ci} test $Y \indep{\pr} X \mid \rmS$ indicates that $Y$ is
conditionally independent to $X$ given $\rmS$. The main difference
between {\method} and the \gls{gs} algorithm then is: 1) considering the
addition of entire sets of variables
$X \cup \rmX : \rmX \in \powerleq{k}{\verts\setminus Y}, X \in
\verts\setminus(Y\cup\rmS\cup\rmX)$ at each iteration to
find higher-order parity-type relationships between variables, and 2)
conducting \gls{ci} tests using just subsets of the current candidate
\gls{mb} $\rmS$ in the condition to ensure that these \gls{ci}
tests remain feasible even when the cardinality of $\rmS$ is large.
As such, we break up {\method} into 3 sub-algorithms:
\begin{description}
\item[\Cref{alg:komb}] is just the entry-point function that implements
  the overall logic of the \gls{gs} algorithm, with the main
  modifications and additions located in the other two algorithms,
\item[\Cref{alg:assoc-mine}] is overall responsible for finding some
  subset of the variables not already in the candidate \gls{mb},
  $\rmX\in\powerleq{k}{\verts\setminus(X\cup Y\cup \rmS)}$, that might
  cause $Y$ and $X$ to be conditionally dependent given $\rmX$ and some
  subset $\rmX'\in\powerleq{k-|\rmX|}{\rmS}$ of the current candidate
  \gls{mb}, finally,
\item[\Cref{alg:ci-test}] is then tasked with finding such a
  $\rmX'\in\powerleq{k-|\rmX|}{\rmS}$ that causes $Y \dep{\pr} Y \mid
  \rmX \cup \rmX'$ and checking if there are any $l$-bounded subsets of
  the current candidate \gls{mb}, $\rmS'\in\powerleq{l}{\rmS}$,
  such that $Y \indep{\pr} Y \mid \rmX \cup \rmX' \cup \rmS'$.
\end{description}

Keeping this outline in mind, we now present the full
algorithm for {\method} and show that it correctly discovers the \gls{mb} of some $Y\in\verts$ assuming that 
\Cref{def:global-markov,def:k-faith,def:bound-sep} hold. The proofs for
any subsequent Theorems can be found in \Cref{apx:proofs-thm}.

\begin{algorithm}[t]
  \caption{Association Mining}\label{alg:assoc-mine}
  \SetKwProg{Fn}{Function}{:}{}
  \Fn{make\_strict($Y$, $X$, $\rmZ$, $\rmS$, $k$, $l$)}{
    \For{$Z \in \rmZ$}{
      \If{$\neg$ no\_seps($Y$, $Z$, $X\cup\rmZ
        \setminus Z$, $\rmS$, $l$)}{
        remove $Z$ from $\rmZ$\;
      }
    }
    \Return{$X\cup\rmZ$}\;
  }
  \Fn{find\_inter\_dep($Y$, $\rmX$, $\rmS$, $\rmV'$, $k$, $l$)}{
    \If{$|\rmX| > k$}{
      \Return{$\emptyset$}\;
    }
    \For{$X \in \rmV\setminus(Y\cup\rmS \cup \rmX)$}{
      $\rmZ\gets$\textit{find\_cond}
      ($Y$, $X$, $\rmX$, $\rmS$, $k$, $l$)\;
      \If{$\rmZ \neq \bot$}{
        \Return{make\_strict($Y$, $X$, $\rmZ$, $\rmS$, $k$, $l$)}\;
      }
    }
    \For{$X\in\rmV'\setminus(Y\cup\rmS\cup\rmX)$}{
        $\rmV'\gets \rmV' \setminus X$\;
        $\rmX' \gets $ find\_inter\_dep(
        $Y$, $\rmX\cup X$, $\rmS$, $\rmV'$, $k$, $l$)\;
        \If{$\rmX' \neq \emptyset$}{
          \Return{$\rmX'$}\;
        }        
    }
    \Return{$\emptyset$}\;
  }
\end{algorithm}

\begin{restatable}[Correctness of \Cref{alg:ci-test}]{theorem}{citest}
  \label{thm:citest}
  Given an estimated distribution $\pre$, distinct variables
  $Y,X\in\verts$, disjoint subsets $\rmX,\rmS\subset\subverts$, and the
  natural numbers $k,l\in\mathbb{N}$; the function
  \textit{find\_cond} in \Cref{alg:ci-test} returns the set
  $\rmX'\cup\rmX$ if and only if
  $\exists \rmX' \in \powerleq{k-|\rmX|}{\rmS}$ where,
  \begin{equation}
    \label{eq:25}
    \forall \rmS' \in \powerleq{l}{\rmS \setminus \rmX'} :
    Y \dep{\pre} X \mid \rmX \cup \rmX' \cup \rmS'.
  \end{equation}
  Otherwise \textit{find\_cond} returns the failure value $\bot$,
  distinct from the empty set, which it may return on success when
  $\rmX'=\rmX=\emptyset$.
\end{restatable}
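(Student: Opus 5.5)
The plan is to argue by direct inspection of \Cref{alg:ci-test}. I will assume it consists of an outer loop that enumerates every candidate dependence-completion $\rmX'\in\powerleq{k-|\rmX|}{\rmS}$, and for each one calls a subroutine \textit{no\_seps}. That subroutine enumerates every $\rmS'\in\powerleq{l}{\rmS\setminus\rmX'}$, runs the test $Y\indep{\pre}X\mid\rmX\cup\rmX'\cup\rmS'$, and returns false as soon as some test reports independence, true otherwise. The outer loop returns $\rmX\cup\rmX'$ at the first $\rmX'$ for which \textit{no\_seps} is true, and returns $\bot$ after exhausting the loop.

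The proof then splits into two lemma-like steps.
\begin{enumerate}
\item \emph{Correctness of \textit{no\_seps}.} I show that, for a fixed $\rmX'$, \textit{no\_seps} returns true if and only if \Cref{eq:25} holds for that $\rmX'$. This is a finite-loop invariant. After processing any prefix of the enumeration of $\powerleq{l}{\rmS\setminus\rmX'}$ without returning, every $\rmS'$ in that prefix satisfies $Y\dep{\pre}X\mid\rmX\cup\rmX'\cup\rmS'$. Termination follows because all sets are finite. Early return on an independence is exactly a witness to the negation of \Cref{eq:25}.
\item \emph{The outer loop.} This gives both directions of the biconditional.
\begin{itemize}
\item ($\Leftarrow$) Suppose some $\rmX'$ satisfies \Cref{eq:25}. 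The loop either reaches it, or returns earlier at another $\rmX''$ that also satisfies \Cref{eq:25} by step 1. In either case it returns $\rmX\cup\rmX'$ for a qualifying $\rmX'$ and never reaches $\bot$.
\item ($\Rightarrow$) A non-$\bot$ return occurs only at an $\rmX'$ for which \textit{no\_seps} was true, so that $\rmX'$ is a witness.
\end{itemize}
Contrapositively, if no witness exists, every iteration fails and $\bot$ is returned.
\end{enumerate}

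It remains to note the distinctness remark. The empty set is a legitimate success output exactly when $\rmX=\emptyset$ and the witness is $\rmX'=\emptyset$. This case arises when $k-|\rmX|\ge 0$, since then $\emptyset\in\powerleq{k-|\rmX|}{\rmS}$. Because $\bot$ is a distinguished sentinel value, the two outcomes are never confused. The degenerate case $|\rmX|>k$ also needs handling: then $\powerleq{k-|\rmX|}{\rmS}$ is empty, so no witness exists and the algorithm must return $\bot$, which the empty loop does.

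The main obstacle is interpretive rather than mathematical. The statement reads "returns the set $\rmX'\cup\rmX$" with $\rmX'$ existentially quantified, so I will make precise that the claim is "returns $\rmX\cup\rmX'$ for \emph{some} witness $\rmX'$". I will also check carefully that the enumeration in \Cref{alg:ci-test} really ranges over $\rmS\setminus\rmX'$, and not over $\rmS$, for the separators. If the pseudocode instead enumerates separators from $\rmS$ and skips overlaps, I would add a short step showing that conditioning on $\rmS'\cap\rmX'$ is redundant given $\rmX'$. Under that reading the two quantifier ranges yield the same set of tests.
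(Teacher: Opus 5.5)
Your proposal is correct and follows essentially the same route as the paper: a direct inspection of \textit{find\_cond} and \textit{no\_seps}. A returned $\rmX\cup\rmX'$ witnesses \Cref{eq:25}, and exhausting the outer loop yields $\bot$. When you match your argument to the actual pseudocode, add the two observations the paper makes explicitly: the extra guard $Y\dep{}X\mid\rmZ$ on line 9 is just the $\rmS'=\emptyset$ instance of the \textit{no\_seps} check, so it changes nothing; and \textit{no\_seps} enumerates $\powerleq{l}{\rmS\setminus\rmZ}$ with $\rmZ=\rmX\cup\rmX'$, which equals $\powerleq{l}{\rmS\setminus\rmX'}$ precisely because $\rmX$ and $\rmS$ are disjoint by hypothesis.
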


\begin{restatable}[Correctness of \Cref{alg:assoc-mine}]{theorem}{assocmine}
  \label{thm:assocmine}
  Given an estimated distribution $\pre$, the target variable
  $Y\in\verts$, the current candidate \gls{mb}
  $\rmS\subset\verts\setminus Y$, the remaining variables
  $\verts'=\verts\setminus(Y\cup \rmS)$, and the natural numbers
  $k,l\in\mathbb{N}$; the call
  \textit{find\_inter\_dep}$(Y,\emptyset,\rmS,\verts',k,l)$ returns the
  empty set if and only if
  \begin{equation}
    \label{eq:fid-empty}
    \begin{aligned}
      &\forall X \in \verts'\:\:
        \forall \rmZ \in \powerleq{k}{\verts\setminus\{Y,X\}}\:\:
        \exists \rmS' \in \powerleq{l}{\rmS\setminus\rmZ} :\\
      &\qquad Y \indep{\pre} X \mid \rmZ \cup \rmS'.
    \end{aligned}
  \end{equation}
  Otherwise it returns a nonempty set $X\cup\rmZ_{R}$ with
  $X\in\verts'$ and $\rmZ_{R}\subseteq\rmZ$ for some
  $\rmZ\in\powerleq{k}{\verts\setminus\{Y,X\}}$ satisfying the estimated
  $l$-bounded association
  $\forall\rmS'\in\powerleq{l}{\rmS\setminus\rmZ}:
  Y\dep{\pre}X\mid\rmZ\cup\rmS'$.
\end{restatable}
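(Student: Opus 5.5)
The proof goes by induction over the recursion of \textit{find\_inter\_dep}, with \Cref{thm:citest} used as a black box for \textit{find\_cond}. Throughout, $Y$, $\rmS$ and $k,l$ stay fixed, and I write $\mathcal{C}(\rmX,\verts'')$ for a call \textit{find\_inter\_dep}$(Y,\rmX,\rmS,\verts'',k,l)$.

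\textbf{Step 1 (a successful inner test).} Consider the first loop of $\mathcal{C}(\rmX,\verts'')$, and suppose \textit{find\_cond}$(Y,X,\rmX,\rmS,k,l)$ returns something other than $\bot$, where $X\in\verts\setminus(Y\cup\rmS\cup\rmX)$. By \Cref{thm:citest}, the returned set is $\rmZ=\rmX\cup\rmX'$ for some $\rmX'\in\powerleq{k-|\rmX|}{\rmS}$ satisfying \Cref{eq:25}.
\begin{itemize}[leftmargin=*]
\item Since $\rmX\cap\rmS=\emptyset$, we get $|\rmZ|\le k$, $\rmZ\subseteq\verts\setminus\{Y,X\}$ and $\rmZ\cap\rmS=\rmX'$. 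Hence $\rmS\setminus\rmZ=\rmS\setminus\rmX'$.
\item So \Cref{eq:25} is exactly the estimated $l$-bounded association $\forall\rmS'\in\powerleq{l}{\rmS\setminus\rmZ}: Y\dep{\pre}X\mid\rmZ\cup\rmS'$ that the theorem asks for.
\item \textit{make\_strict} only deletes elements of $\rmZ$ and returns $X\cup\rmZ_R$ with $\rmZ_R\subseteq\rmZ$. This set is nonempty because it contains $X$.
\item It remains to note that $X\in\verts'$. This holds because $\verts'=\verts\setminus(Y\cup\rmS)$ and $X\notin\rmS\cup Y$.
\end{itemize}
Conversely, take any $X\in\verts'$ and any $\rmZ\in\powerleq{k}{\verts\setminus\{Y,X\}}$ witnessing a failure of \Cref{eq:fid-empty}. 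Split it as $\rmX:=\rmZ\setminus\rmS\subseteq\verts'\setminus X$ and $\rmX':=\rmZ\cap\rmS$. Then $|\rmX'|\le k-|\rmX|$, and \Cref{thm:citest} says that \textit{find\_cond}$(Y,X,\rmX,\rmS,k,l)\neq\bot$. The same split also shows that a nonempty return of the whole procedure is always produced by some successful \textit{find\_cond} call, propagated upward through the recursion.

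\textbf{Step 2 (enumeration lemma).} I would prove by induction on $|\verts''|$ the following claim. Unless $\mathcal{C}(\rmX,\verts'')$ returns early with a nonempty set, it executes the first loop, directly or in a descendant call, for every $\rmX\cup\rmT$ with $\rmT\subseteq\verts''\setminus(Y\cup\rmS\cup\rmX)$ and $|\rmX\cup\rmT|\le k$.
\begin{itemize}[leftmargin=*]
\item For $\rmT=\emptyset$ this is the first loop of the call itself. It is not blocked by the guard, because the guard fires only when $|\rmX|>k$.
\item For $\rmT\ne\emptyset$, let $t$ be the element of $\rmT$ reached first by the second loop. At that iteration only elements preceding $t$ (and $t$ itself) have been removed from $\verts''$, so $\rmT\setminus t$ is still contained in the updated $\verts''$.
\item The recursive call $\mathcal{C}(\rmX\cup t,\verts''\setminus\{\dots,t\})$ has strictly smaller $\verts''$, so the induction hypothesis applies to it and gives the claim.
\end{itemize}
Applying this to the top call $\mathcal{C}(\emptyset,\verts')$, every $\rmX=\rmZ\setminus\rmS$ arising in Step 1 is tested, since $\rmZ\setminus\rmS\subseteq\verts'$ and has size at most $k$.

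\textbf{Step 3 (assembling the equivalence).} Suppose \Cref{eq:fid-empty} holds. By the "only if" direction of \Cref{thm:citest} combined with the decomposition in Step 1, every \textit{find\_cond} call returns $\bot$. Every branch then falls through, the recursion is finite because $\verts''$ strictly shrinks, and the top call returns $\emptyset$. If \Cref{eq:fid-empty} fails, Step 2 guarantees that the witnessing triple is reached unless an earlier success occurs. Either way some \textit{find\_cond} succeeds, and by Step 1 its nonempty return $X\cup\rmZ_R$ has the stated properties and is passed up unchanged.

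I expect the main obstacle to be the enumeration lemma in Step 2. One difficulty is the interplay between the shrinking $\verts''$ and the fact that the first loop ranges over all of $\verts\setminus(Y\cup\rmS\cup\rmX)$ rather than over $\verts''$. The other is the bookkeeping that the returned $\rmZ$ satisfies $\rmS\setminus\rmZ=\rmS\setminus\rmX'$, which is what makes the $l$-bounded association in \Cref{thm:citest} and in this theorem coincide.
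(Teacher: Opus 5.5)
Your proof is correct and follows the same route as the paper. The nonempty-return case goes through \Cref{thm:citest} together with $\rmS\setminus\rmZ=\rmS\setminus\rmX'$, and the empty-return case goes through the unique split $\rmZ=(\rmZ\setminus\rmS)\cup(\rmZ\cap\rmS)$ into an external dependence set and an in-blanket part. The one difference is that you prove, by induction on $|\verts''|$, that the recursion visits every dependence set in $\powerleq{k}{\verts'}$, whereas the paper only asserts that it traverses ``the full enumeration tree''.
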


\begin{algorithm}[t]
  \caption{Practical Conditional Independence Test}\label{alg:ci-test}
  \SetKwProg{Fn}{Function}{:}{}
  \Fn{no\_seps($Y$, $X$, $\rmZ$, $\rmS$, $l$)}{
    \For{$\rmS' \in \powerleq{l}{\rmS \setminus \rmZ}$}{
      \If{$Y \indep{} X \mid \rmZ \cup \rmS'$}{
        \Return{False}\;
      }
    }
    \Return{True}\;
  }
  \Fn{find\_cond($Y$, $X$, $\rmX$, $\rmS$, $k$, $l$)}{
    \For{$\rmX' \in \powerleq{k-|\rmX|}{\rmS}$}{
      $\rmZ \gets \rmX' \cup \rmX$\;
      \If{$Y \dep{} X \mid \rmZ$}{
        \If{no\_seps($Y$, $X$, $\rmZ$, $\rmS$, $l$)}{
          \Return{$\rmZ$}\;
        }
      }
    }
    \Return{$\bot$}\;
  }
\end{algorithm}

\begin{restatable}[Correctness of \Cref{alg:komb}]{theorem}{algkomb}
  \label{thm:algkomb}
  Given an estimated distribution $\pre$ over variables $\verts$, the
  target variable whose graphical Markov blanket we want to discover
  $Y\in\verts$,
  and the natural numbers $k,l\in\mathbb{N}$; the grow phase of
  {\method} then grows the candidate \gls{mb} $\rmS$ such that,
  \begin{equation}
    \label{eq:27}
    \mbs{\gr}(Y) \subseteq \rmS.
  \end{equation}
  If, in addition, every true blanket member admits an in-blanket
  dependence witness---i.e. for every $X\in\mbs{\gr}(Y)$ and every
  $\rmS$ with $\mbs{\gr}(Y)\subseteq\rmS\subseteq\verts\setminus Y$
  there is a witness $\rmZ\in\powerleq{k}{\mbs{\gr}(Y)\setminus X}$ with
  $\lassoc{\rmS\setminus(X\cup\rmZ)}{Y,X}{\rmZ}$---then the shrink phase
  of {\method} removes variables from $\rmS$ until,
  \begin{equation}
    \label{eq:28}
    \mbs{\gr}(Y) = \rmS,
  \end{equation}
  where it subsequently terminates and returns $\rmS$.
\end{restatable}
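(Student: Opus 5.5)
We prove the two claims of \Cref{thm:algkomb} in the order the algorithm runs them, treating $\pre$ as satisfying the Global Markov Property (\Cref{def:global-markov}), $k$-order faithfulness (\Cref{def:k-faith}) and the $l$-bounded separator assumption (\Cref{def:bound-sep}). \Cref{thm:citest} and \Cref{thm:assocmine} are used as black-box specifications of the subroutines.

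\emph{Grow phase: termination and $\mbs{\gr}(Y)\subseteq\rmS$.} The repeat loop terminates. Indeed, by \Cref{thm:assocmine} every non-terminating iteration adds a nonempty set $X\cup\rmZ_R$ with $X\in\verts'=\verts\setminus(Y\cup\rmS)$, so $|\rmS|$ strictly increases, and $\verts$ is finite. At termination, $\textit{find\_inter\_dep}$ returned $\emptyset$, so by \Cref{thm:assocmine} condition \Cref{eq:fid-empty} holds for the final $\rmS$. We argue by contradiction. Suppose some $X\in\mbs{\gr}(Y)\setminus\rmS$.

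We first claim $Y\dep{\gr}X\mid\rmS$. For a UG, $X$ is a neighbour of $Y$. For a DAG, $X$ is a parent, a child, or a spouse, and then no set excluding $X$ can separate $Y$ from $X$ unless it also excludes the common child. This is the delicate point. I would instead use minimality of the Markov boundary: if $Y\indep{\gr}X\mid\rmS$ for some $\rmS\not\ni X$, the graphoid axioms in $\indeps{\gr}$ (which include intersection, since graphical separation satisfies it) yield that $\mbs{\gr}(Y)\setminus X$ is still a blanket. That contradicts minimality.

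Given the claim, the contrapositive form of \Cref{def:bound-sep} supplies $\rmZ\in\powerleq{k}{\verts\setminus\{Y,X\}}$ with $\lassoc{\rmS}{Y,X}{\rmZ}$. In particular, $Y\dep{\pre}X\mid\rmZ\cup\rmS'$ holds for every $\rmS'\in\powerleq{l}{\rmS\setminus\rmZ}$. This contradicts \Cref{eq:fid-empty}, so $\mbs{\gr}(Y)\subseteq\rmS$.

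\emph{Shrink phase.} A variable $X$ is removed exactly when $\textit{find\_cond}(Y,X,\emptyset,\rmS\setminus X,k,l)=\bot$. By \Cref{thm:citest}, this means no $\rmX'\in\powerleq{k}{\rmS\setminus X}$ satisfies $\forall\rmS'\in\powerleq{l}{\rmS\setminus(X\cup\rmX')}:Y\dep{\pre}X\mid\rmX'\cup\rmS'$.

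True members are never removed, which we show by induction over removals. The invariant is $\mbs{\gr}(Y)\subseteq\rmS$, which holds initially by the grow phase. While it holds, the extra witness hypothesis gives each $X\in\mbs{\gr}(Y)$ a set $\rmZ\subseteq\mbs{\gr}(Y)\setminus X\subseteq\rmS\setminus X$ with $\lassoc{\rmS\setminus(X\cup\rmZ)}{Y,X}{\rmZ}$. Taking $\rmX'=\rmZ$ shows $\textit{find\_cond}\neq\bot$, so $X$ is kept and the invariant persists.

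Every false member is eventually removed. For $X\in\rmS\setminus\mbs{\gr}(Y)$, the Global Markov Property together with $\mbs{\gr}(Y)\subseteq\rmS\setminus X$ and weak union/decomposition gives $Y\indep{\pre}X\mid\mbs{\gr}(Y)\cup\rmW$ for all $\rmW\subseteq\rmS\setminus(X\cup\mbs{\gr}(Y))$. We then need to convert this into: for every candidate $\rmX'$ of size $\le k$ there exists an $l$-bounded $\rmS'$ separating $Y$ and $X$. Here I would invoke \Cref{def:bound-sep} in its forward direction, applied with separating set $\rmS\setminus X$ and $Y\indep{\gr}X\mid\rmS\setminus X$. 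The hope is that its biconditional form permits replacing an arbitrary-size separator $\rmS'\supseteq\mbs{\gr}(Y)\setminus\rmX'$ by an $l$-bounded one.

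This last step is the main obstacle. The assumption as stated gives the implication from probabilistic to graphical independence, not the converse, so I expect to need its middle equivalence ("$\exists$ arbitrary $\rmS'$ $\iff$ $\exists$ $l$-bounded $\rmS'$"), read as an assumption on $\pre$. Once every false member fails $\textit{find\_cond}$, the while loop removes them one at a time; removals do not affect the blanket members' witnesses by the invariant. The loop terminates with $\rmS=\mbs{\gr}(Y)$ because $\rmS$ is finite and strictly shrinks.
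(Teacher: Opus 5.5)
\textbf{The gap is in your grow phase.} You claim that every $X\in\mbs{\gr}(Y)\setminus\rmS$ satisfies $Y\dep{\gr}X\mid\rmS$. The lemma you use to justify it is false. That lemma says: if $Y\indep{\gr}X\mid\rmS$ for some $\rmS\not\ni X$, then intersection makes $\mbs{\gr}(Y)\setminus X$ a blanket. Take $Y\rightarrow C\leftarrow X$. Here $\mbs{\gr}(Y)=\{C,X\}$ and $Y\indep{\gr}X\mid\emptyset$. Yet $\{C\}$ is not a blanket, because conditioning on $C$ opens the collider. Intersection only yields $Y\indep{\gr}X\mid\mbs{\gr}(Y)\setminus X$ when $\mbs{\gr}(Y)\setminus X\subseteq\rmS$, which is exactly what you are trying to prove. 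The claim itself also fails for an arbitrary missing member: a spouse whose common child is also outside $\rmS$ can be separated from $Y$ by $\rmS$. Your ``delicate point'' remark spots this, but the minimality argument you substitute does not handle it.

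\textbf{How to close it.} Your contrapositive step is just the contrapositive of the paper's forward chain through \Cref{eq:fid-empty}, \Cref{def:bound-sep} and $k$-order faithfulness. Applied to every $X\in\verts\setminus(Y\cup\rmS)$, it shows that at termination $Y\indep{\gr}X\mid\rmS$ for all such $X$. The paper then finishes structurally, in two stages:
\begin{itemize}
\item Parents and children (neighbours, for an undirected $\gr$) cannot be separated from $Y$ by any set, so they all lie in $\rmS$.
\item A spouse $X$ then shares a child $C\in\rmS$, so $Y\rightarrow C\leftarrow X$ is active given $\rmS$, which forces $X\in\rmS$.
\end{itemize}
If you want a graphoid-style argument instead, it also works, but differently from how you set it up. Graphical separation satisfies composition, so the pairwise statements give $Y\indep{\gr}\verts\setminus(Y\cup\rmS)\mid\rmS$, i.e.\ $\rmS$ is a blanket. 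In a graphoid the intersection of two blankets is again a blanket (weak union plus intersection), so the Markov boundary is contained in every blanket.

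\textbf{Shrink phase.} This part matches the paper, including the step you flag as the main obstacle. The paper likewise reads the middle biconditional of \Cref{def:bound-sep} as a substantive assumption, and uses it to shrink the unbounded separator $\mbs{\gr}(Y)\setminus\rmZ\subseteq\rmS\setminus X$ to an $l$-bounded one. One small fix: that biconditional quantifies over all $\rmZ\in\powerleq{k}{\verts\setminus\{Y,X\}}$, not just $\rmZ\subseteq\rmS\setminus X$. So in your weak-union step, let $\rmW$ range over all of $\verts\setminus(Y\cup X\cup\mbs{\gr}(Y))$. The derivation goes through verbatim and is a cleaner substitute for the paper's path-blocking argument.
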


Beyond \Cref{def:global-markov,def:k-faith,def:bound-sep}, the recovery
guarantee of \Cref{thm:algkomb} assumes the \gls{ci} decisions are
\emph{sufficiently accurate}---i.e.\ they reflect the true conditional
independencies of $\pr$.

\subsection{Computational Complexity of {\method}}\label{sec:complexity}
As {\method} is a proof-of-concept, we prioritised correctness and
clarity over efficiency. It is nonetheless instructive to characterise
its worst-case time complexity, which we state in
\Cref{cor:complexity} with proof given in \Cref{apx:proof-complex}.

\begin{restatable}[Computational Complexity of {\method}]{corollary}{kombcomplexity}
  \label{cor:complexity}
  Let $n=|\verts|$ denote the number of variables, $N$ the sample size,
  and $K$ the maximum size of any candidate \gls{mb} $\rmS$ encountered
  by \Cref{alg:komb} during execution. Furthermore assume each \gls{ci}
  test runs in $O(N)$ time. Then, for a given order $k$ and separator
  bound $l$, {\method} (\Cref{alg:komb}) terminates in worst-case time
  \begin{equation}
    \label{eq:complexity}
    O\!\left(
      K\, n \sum_{i=0}^{k}\binom{n}{i}
      \left[
        \sum_{i'=0}^{k-i}\binom{K}{i'}
        \sum_{j=0}^{l}\binom{K-i'}{j}\, N
      \right]
    \right),
  \end{equation}
  where we adopt the convention that $\binom{n}{m}=0$ whenever $n<m$.
\end{restatable}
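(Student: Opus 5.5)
We prove \Cref{cor:complexity} by counting \gls{ci} tests: every elementary operation in \Cref{alg:komb,alg:assoc-mine,alg:ci-test} is either a \gls{ci} test costing $O(N)$ or bookkeeping dominated by the test it precedes. The plan is to bound, in order, the cost of one call to \textit{no\_seps}, then of one call to \textit{find\_cond}, then of one top-level call to \textit{find\_inter\_dep}, and finally to multiply by the number of such calls made by \Cref{alg:komb}. Throughout, $|\rmS|\le K$ by the definition of $K$.

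\textbf{Step 1: \textit{no\_seps}.} A call with conditioning set $\rmZ=\rmX'\cup\rmX$, where $\rmX'\subseteq\rmS$ and $|\rmX'|=i'$, iterates over $\powerleq{l}{\rmS\setminus\rmZ}$. Since $|\rmS\setminus\rmZ|\le K-i'$, this gives at most $\sum_{j=0}^{l}\binom{K-i'}{j}N$ time.

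\textbf{Step 2: \textit{find\_cond}.} With $|\rmX|=i$, the loop ranges over $\rmX'\in\powerleq{k-i}{\rmS}$, so there are at most $\binom{K}{i'}$ choices of each size $i'\le k-i$. Each choice costs one test plus one \textit{no\_seps} call, which gives exactly the bracketed term in \Cref{eq:complexity}. A call to \textit{make\_strict} makes at most $k$ calls to \textit{no\_seps} and occurs at most once per top-level call, so it is absorbed into this term.

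\textbf{Step 3: \textit{find\_inter\_dep} from the root.} The recursion enumerates sets $\rmX$ of size $i\le k$ built from distinct elements of $\verts'$. Because $\rmV'$ is shrunk before each recursive call, every subset is visited at most once, which bounds the number of recursion nodes with $|\rmX|=i$ by $\binom{n}{i}$. At each node, the first loop calls \textit{find\_cond} for at most $n$ choices of $X$. The cost of one top-level call is therefore $O\bigl(n\sum_{i=0}^{k}\binom{n}{i}[\cdots]\bigr)$.

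\textbf{Step 4: the phases of \Cref{alg:komb}.} Each grow iteration either terminates or adds at least one new variable to $\rmS$, so there are at most $K+1$ iterations. The shrink phase removes an element per successful check; the check scans at most $K$ candidates, each via one \textit{find\_cond} call with $\rmX=\emptyset$, and this is dominated by the $i=0$ term times $n\ge K$. Both phases are thus bounded by $K$ times the Step 3 cost, which yields \Cref{eq:complexity}.

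\textbf{Main obstacle.} The hardest step is Step 3: showing that the recursive enumeration in \textit{find\_inter\_dep} does not revisit subsets. The early returns only help, and the $\rmV'$-shrinking rule must be argued to enforce a canonical ordering. If that ordering argument fails, the fallback is the cruder count of ordered sequences, at most $n^i$ per level. That bound has the same form as \Cref{eq:complexity} up to a factor of $i!$, which is constant for fixed $k$.
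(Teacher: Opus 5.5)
Your proposal is correct and follows the paper's proof essentially step for step. It uses the same innermost-outward count: at most $\sum_{j=0}^{l}\binom{K-i'}{j}$ separators per \textit{no\_seps} call, at most $\binom{K}{i'}$ choices of $\rmX'$ per size in \textit{find\_cond}, at most $\sum_{i=0}^{k}\binom{n}{i}$ enumeration-tree nodes times $n$ candidates per sweep, and $O(K)$ sweeps, with \textit{make\_strict} and the $O(K^{2})$ shrink-phase \textit{find\_cond} calls absorbed via $K\le n$. The paper simply asserts the visit-each-subset-once property that you flag as the main obstacle; your canonical-ordering argument via the shrinking $\rmV'$ correctly justifies it, so the $i!$ fallback is unnecessary.
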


Treating $k$ and $l$ as constants, \Cref{eq:complexity} is bounded by
$O(n^{k+1}\,K^{k+l+1}\,N)$---polynomial in $n$, $K$, and $N$, and for
$k=0$ it reduces to a scan that is linear in $n$, recovering the
original \gls{gs} algorithm. Crucially, the search for a
dependence-exposing subset and its separator is governed by $K$ rather
than $n$, a direct consequence of the $l$-bounded separator assumption
(\Cref{def:bound-sep}). The remaining exponential dependence on $k$ and
$l$ motivates keeping them small. The ablation study in
\Cref{apx:ablation} showcases how differing values of $k$ and $l$
affects \Cref{alg:komb} runtime and its ability to find high-order
dependencies. Generally, from the results in \Cref{apx:ablation} for
the benchmark datasets used, we found that setting $k=l$ and having
$k\leq 2$ provides a decent trade-off between runtime and finding
high-order dependencies.

\begin{table}
  \centering
  \caption{Results on Synthetically Generated Data}
  \label{tab:syn}
\setlength{\tabcolsep}{5pt}
\begin{tabular}{lccccc}
\toprule
Method & and & ex-1 & ex-2 & or & parity \\
\midrule


GS
& 0.410 & 0.313 & 0.288 & 0.365 & 0.025 \\
& {\scriptsize $\pm$0.405} & {\scriptsize $\pm$0.419} & {\scriptsize $\pm$0.407} & {\scriptsize $\pm$0.374} & {\scriptsize $\pm$0.110} \\







kOMB 
& 0.410 & 0.283 & 0.268 & 0.350 & 0.025 \\
{\scriptsize $(k=0,l=3)$}
& {\scriptsize $\pm$0.405} & {\scriptsize $\pm$0.372} & {\scriptsize $\pm$0.374} & {\scriptsize $\pm$0.351} & {\scriptsize $\pm$0.110} \\

kOMB
& \textbf{0.618} & 0.675 & 0.675 & 0.560 & 0.050 \\
{\scriptsize $(k=1,l=3)$}
& {\scriptsize $\pm$0.374} & {\scriptsize $\pm$0.474} & {\scriptsize $\pm$0.474} & {\scriptsize $\pm$0.435} & {\scriptsize $\pm$0.221} \\

kOMB
& \textbf{0.618} & \textbf{1.000} & \textbf{1.000} & \textbf{0.710} & \textbf{1.000} \\
{\scriptsize $(k=2,l=3)$}
& {\scriptsize $\pm$0.374} & {\scriptsize $\pm$0.000} & {\scriptsize $\pm$0.000} & {\scriptsize $\pm$0.384} & {\scriptsize $\pm$0.000} \\

\bottomrule
\end{tabular}

\end{table}

\section{Experiments}\label{sec:exp}
In order to empirically test {\method} we implemented the algorithm as a
\texttt{Python} package written in \texttt{Rust}. The implementation
uses the G-Test with $\alpha=0.01$ as its conditional independence test
of choice when possible and falls back to the SCI test~\citep{marx2019}
when G-test is too weak for the test being conducted---i.e. when the
average frequency for each cell is $<5$. A public repo for our
implementation of {\method} and code for the experiments can be found at
\url{https://github.com/lklee9/k-order-Markov-blanket}. All experiments
were run on an Apple M4 Mac~mini.

Throughout this section we will compare {\method} against existing
methods for \acrfull{mb} discovery. Specifically we will compare
against the following eight \gls{mb} discovery methods:
GS~\citep{margaritis1999}, IAMB~\citep{tsamardinos2003},
HITON~\citep{aliferis2003}, MMMB~\citep{tsamardinos2003b},
PCMB~\citep{pena2007}, LRH~\citep{liu2016}, STMB~\citep{gao2017a},
BAMB~\citep{ling2019}. We will use the implementation of these methods
found in the \texttt{pyCausalFS} python library~\citep{yu2020a} with a
significance level of $0.01$ for any statistical tests as well.

When comparing between different \gls{mb} discovery methods, we
will use the F1 score defined as: $F1 = 2 \times \mathit{precision}
\times \mathit{recall} / (\mathit{precision}+\mathit{recall})$, which is
the harmonic mean of \emph{precision} and \emph{recall}. Here
\emph{precision} denotes the ratio between the number of true positives
in the inferred \gls{mb} and the size of the inferred \gls{mb}. On the
other hand, \emph{Recall} denotes the ratio between the number of true
positives in the inferred \gls{mb} and the size of the true
\gls{mb}. Therefore $F1=1$ represents the perfect precision and recall.

\begin{figure}[t]
  \centering
  \includegraphics[width=\columnwidth]{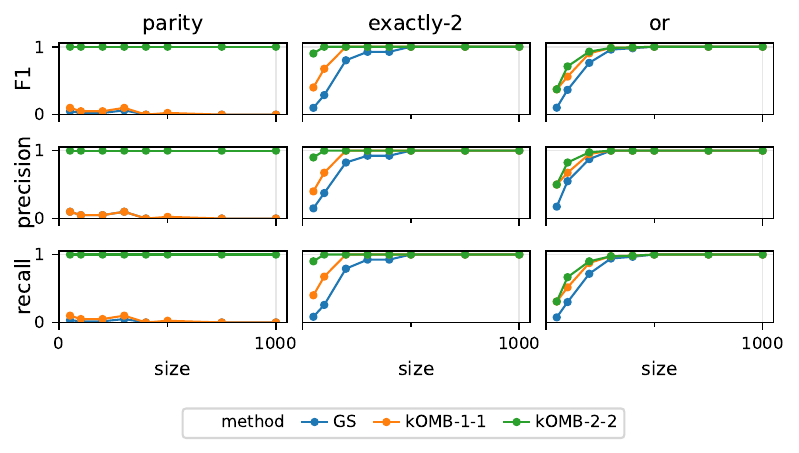}
  \caption{Precision, recall, and F1 score of \gls{gs} and
    ({\method}-$k$-$l$) when recovering the \glspl{mb} of the synthetic
    \glspl{bn} in \Cref{sec:exp-syn}, over sample size. Each row
    corresponds to a metric and each column to a \gls{bn}.}
  \label{fig:syn-metrics}
\end{figure}

\begin{table*}[t]
  \centering
  \caption{F1 score and runtime (in seconds) on benchmark datasets. Best F1
    and fastest runtime per dataset in bold.}
  \label{tab:real}
  \setlength{\tabcolsep}{4pt}
\begin{tabular}{lcccccccc}
\toprule
 & \multicolumn{4}{c}{F1 Score} & \multicolumn{4}{c}{Runtime (s)} \\
\cmidrule(lr){2-5}\cmidrule(lr){6-9}
Method & Alarm1 & Barley & Insurance & Mildew & Alarm1 & Barley & Insurance & Mildew \\
\midrule

BAMB
& 0.6742 & 0.3385 & 0.6588 & 0.4945 & 3.517 & 1.149 & 5.615 & 0.797 \\
& {\scriptsize $\pm$0.1771} & {\scriptsize $\pm$0.1226} & {\scriptsize $\pm$0.0797} & {\scriptsize $\pm$0.1084} & {\scriptsize $\pm$4.305} & {\scriptsize $\pm$0.681} & {\scriptsize $\pm$5.358} & {\scriptsize $\pm$0.392} \\

GS
& 0.3434 & 0.2113 & 0.4615 & 0.2867 & 4.875 & 1.575 & 4.433 & 1.035 \\
& {\scriptsize $\pm$0.1503} & {\scriptsize $\pm$0.0807} & {\scriptsize $\pm$0.1477} & {\scriptsize $\pm$0.1581} & {\scriptsize $\pm$5.183} & {\scriptsize $\pm$0.606} & {\scriptsize $\pm$3.792} & {\scriptsize $\pm$0.341} \\

HITON\_MB
& 0.7636 & 0.3400 & 0.6746 & 0.4918 & \textbf{2.560} & \textbf{1.027} & 3.514 & 0.604 \\
& {\scriptsize $\pm$0.1861} & {\scriptsize $\pm$0.1284} & {\scriptsize $\pm$0.1594} & {\scriptsize $\pm$0.1487} & {\scriptsize $\pm$1.277} & {\scriptsize $\pm$0.298} & {\scriptsize $\pm$2.525} & {\scriptsize $\pm$0.218} \\

MMMB
& 0.7692 & 0.3298 & 0.6978 & 0.4861 & 2.724 & 1.090 & \textbf{3.508} & \textbf{0.553} \\
& {\scriptsize $\pm$0.1874} & {\scriptsize $\pm$0.1274} & {\scriptsize $\pm$0.1405} & {\scriptsize $\pm$0.1810} & {\scriptsize $\pm$1.278} & {\scriptsize $\pm$0.374} & {\scriptsize $\pm$2.525} & {\scriptsize $\pm$0.128} \\

kOMB
& 0.7804 & \textbf{0.5158} & 0.7136 & \textbf{0.6220} & 5.258 & 58.190 & 4.856 & 17.698 \\
{\scriptsize $(k=1,l=1)$} & {\scriptsize $\pm$0.1145} & {\scriptsize $\pm$0.1964} & {\scriptsize $\pm$0.1328} & {\scriptsize $\pm$0.1595} & {\scriptsize $\pm$2.921} & {\scriptsize $\pm$31.619} & {\scriptsize $\pm$2.041} & {\scriptsize $\pm$10.205} \\

kOMB
& \textbf{0.8209} & 0.3128 & \textbf{0.7336} & 0.4360 & 18.144 & 335.821 & 14.493 & 121.909 \\
{\scriptsize $(k=2,l=2)$} & {\scriptsize $\pm$0.1914} & {\scriptsize $\pm$0.1582} & {\scriptsize $\pm$0.1624} & {\scriptsize $\pm$0.1394} & {\scriptsize $\pm$7.566} & {\scriptsize $\pm$135.966} & {\scriptsize $\pm$7.280} & {\scriptsize $\pm$33.153} \\
\bottomrule
\end{tabular}

\end{table*}

\subsection{Experiments with Synthetic Bayesian
  Networks}\label{sec:exp-syn}
To systematically test how {\method} behaves in the face of true and
empirical faithfulness violations, we will task {\method} at learning
the \gls{mb} of every variable for five different basic toy Bayesian
networks. These problems will have $4$
variables---$\rmX=\{X_{1},X_{2},X_{3}\}$ and $Y$---with a \gls{dag} that
has edges $\forall i\in[3] : X_{i}\rightarrow Y$. Each variable
$X\in\rmX$ is an independent Bernoulli random variable with $p=0.5$. The
conditional distribution of $Y$ is then $\pr(Y=f(\rmX)\mid\rmX)=0.9$
where $f(\rmX)$ is one of $5$ different Boolean functions, one for each
toy problem which we describe in \Cref{apx:bool-func}.

We then sample $100$ samples from the $5$ Bayesian networks $10$
different times, leading to $10\times 5$ samples of size $100$---hence
the size of each sample is much larger than the cardinality of the
domain over all variables, $100 >> 2^{4}$. We then tasked each method to
infer---for each Bayesian network---the \gls{mb} of all $4$ variables
from the $10$ different samples separately. We then averaged the F1
score over the $10\times 4$ runs and variables for each Bayesian
network, resulting in \Cref{tab:syn}. Note that we only include the
results for {\method} and \acrfull{gs} because \gls{gs} was the best
performing method out of all the previous methods mentioned in
\Cref{sec:exp} for these synthetic datasets.

From the results in \Cref{tab:syn}, we can observe that {\method}
consistently outperforms the other approaches at \gls{mb}
discovery. More interestingly---even for the problems whose Bayesian
networks are faithful---as we increase $k$ and look at higher order
relationships, {\method} achieves a better F1 score and therefore is
capable of better overcoming empirical unfaithfulness. Furthermore the
\gls{mb} for the parity problem is only recoverable when $k=2$, which
confirms {\method}'s ability to find these higher-order parity-like
relations.

In order to better understand the source of these empirical
unfaithfulness in the low sample size regime, \Cref{fig:syn-metrics}
traces the precision, recall, and F1 score as the sample size grows from
$50$ to $100$. As expected only {\method}-2-2 is capable of recovering
the higher-order dependence of the parity network regardless of sample
size. From the other faithful networks, we can observe that at small
sample size, all methods exhibit lower recall and precision which
indicates dependencies missed by the \gls{mb} discovery methods and
spurious dependencies erroneously detected by said methods respectively.
That said, we can also observe that for {\method}, the higher the value
of $k$ used, the greater its performance in the low sample regime and
the quicker it recovers the true \glspl{mb} for all variables.

\subsection{Experiments with Benchmark Bayesian
  Networks}\label{sec:exp-real}

To further explore how {\method} behaves in more realistic scenarios, we
then repeated the same procedure as \Cref{sec:exp-syn}, but with some
well-known benchmark datasets\footnote{The datasets can be accessed at:
  \url{https://pages.mtu.edu/~lebrown/supplements/mmhc_paper/mmhc_index.html}}
in the \gls{mb} discovery literature. The main difference in the
experiments in this section is that we used samples of $5000$ and we
only tasked each method at finding the \gls{mb} for the $10$ variables
with the largest neighborhoods in the Bayesian network.  Furthermore, we
also omit some of the poorer performing methods in \Cref{tab:real} to
conserve space.

From the results in \Cref{tab:real} we can observe that {\method} does
generally outperform the other existing approaches to \gls{mb}
discovery. Whether a higher order helps, however, depends on the network's
cardinality. On the lower-cardinality Alarm1 and Insurance networks
{\method} with order $2$ improves on order $1$, whereas on the
high-cardinality Barley and Mildew networks order $1$ is best and order
$2$ is substantially worse---there the larger conditioning sets leave the
higher-order \gls{ci} tests underpowered at this sample size. Runtime, in
turn, grows with both the order $k$ and the cardinality of the
network. Specifically, the baselines finish within roughly $6$\,s per
dataset, while {\method} ranges from comparable at order $1$ on the
lower-cardinality networks (Alarm1, Insurance) to a few hundred times
slower at order $2$ on the high-cardinality networks (Barley,
Mildew). The full results and runtimes for every method and every
$(k,l)$ configuration are given in \Cref{tab:real-full,tab:runtime-full}
under \Cref{apx:ablation}.



\section{Conclusion}
We introduced a $k$-order relaxation of faithfulness that explicitly
accounts for parity-type dependencies among $k{+}2$ variables, and used
it to derive a proof-of-concept \acrfull{mb} discovery algorithm,
{\method}; with theoretical guarantees that {\method} recovers the 
graphical \gls{mb} under the proposed assumptions. 
Empirically, {\method} overcomes both true and empirical
faithfulness violations on synthetic problems, and it performs
competitively on commonly used benchmark datasets.

As {\method} is a proof-of-concept demonstrating that exploiting
higher-order dependencies can improve \gls{mb} discovery, an important
direction for future work is the development of \emph{approximate}
algorithms that uncover these higher-order dependencies without
{\method}'s worst-case cost---for example, through more intricate
pruning rules that make {\method} scale better with larger values of
$k$. Secondly, although our ablation (\Cref{apx:ablation}) shows that
$l=k$ with $k\leq 2$ is a robust default, principled \emph{a priori}
selection of $k$ and $l$ without expert knowledge remains open. A
practical alternative is to run {\method} for several values of $k$ and
$l$ while memoising results across runs to speed up the total
runtime. Finally, our evaluation compares against established
constraint-based \gls{mb} discovery methods using the G-test (with an
SCI fallback) for conditional independence. Therefore, utilising more
recent alternative \gls{ci} tests is a further avenue for future work.

\newpage



\begin{acknowledgements} 
  This research has been funded by the Federal Ministry of Research,
  Technology and Space of Germany and the state of North
  Rhine-Westphalia as part of the Lamarr Institute for Machine Learning
  and Artificial Intelligence.
\end{acknowledgements}

\bibliography{ref}

\newpage


\title{High-Order Markov Blanket Discovery\\
  via a k-Order Relaxation of the Faithfulness Assumption
  \\(Supplementary Material)}
\maketitle
\appendix

\section{Graphoid Axioms}\label{apx:graphoid}
\begin{definition}[Graphoid Axioms]\label{def:graphoid}
  For disjoint subsets $\rmX,\rmY,\rmZ,\rmW\subseteq\verts$, a
  semi-graphoid independence model $\indeps{\cdot}$ obeys the following
  properties~\citep{dawid1979,pearl1987}:
  \begin{enumerate}
  \item Symmetry:
    $\rmX \indep{} \rmY \mid \rmZ \implies \rmY\indep{}\rmX\mid\rmZ$
  \item Decomposition:
    $\rmX \indep{} \rmY \cup \rmW \mid \rmZ \implies
    \rmX\indep{}\rmY\mid\rmZ$
  \item Weak Union:
    $\rmX\indep{}\rmY\cup\rmW\mid\rmZ \implies \rmX\indep{}\rmY \mid
    \rmW\cup\rmZ$
  \item Contraction:
    $(\rmX\indep{}\rmY \mid \rmW\cup\rmZ) \wedge
    (\rmX\indep{}\rmW\mid\rmZ)\implies \rmX\indep{}\rmY\cup\rmW\mid\rmZ$
  \end{enumerate}
  The independence model is called a graphoid if it also obeys the
  following additional property~\citep{pearl1985}:
  \begin{enumerate}[resume]
  \item Intersection:
    $(\rmX \indep{} \rmY \mid \rmZ \cup \rmW) \wedge
    (\rmX \indep{} \rmW \mid \rmZ \cup \rmY)
    \implies \rmX \indep{} \rmY \cup \rmW \mid
    \rmZ$
  \end{enumerate}
  All graph independence models $\indeps{\gr}$ are guaranteed to
  obey all the graphoid axioms~\citep{lauritzen2018}. However,
  probabilistic independence models $\indeps{\pr}$ are only guaranteed to
  obey the semi-graphoid axioms. However if the joint distribution is
  strictly positive, then $\indeps{\pr}$ is guaranteed to be a
  graphoid as well~\citep{pearl1988,pearl1989}.
\end{definition}

\section{Proofs}\label{apx:proofs}
\kdeptoassoc*
\begin{proof}
  Consider the family of association witnesses contained in $\rmZ$,
  \begin{equation*}
    \mathcal{W} \coloneq
    \{\rmW \subseteq \rmZ : \kassoc{\rmS}{Y,X}{\rmW}\}.
  \end{equation*}
  By the hypothesis $\kassoc{\rmS}{Y,X}{\rmZ}$ we have $\rmZ\in\mathcal{W}$,
  so $\mathcal{W}$ is nonempty; and since $\rmZ$ is finite, so is
  $\mathcal{W}$. Hence $\mathcal{W}$ contains an element $\rmZ'$ of minimum
  cardinality. Every proper subset $\rmZ^{*}\subsetneq\rmZ'$ satisfies
  $|\rmZ^{*}|<|\rmZ'|$ and $\rmZ^{*}\subseteq\rmZ$, so by the minimality of
  $\rmZ'$ we have $\rmZ^{*}\notin\mathcal{W}$, i.e.\
  $\neg\,\kassoc{\rmS}{Y,X}{\rmZ^{*}}$. Together with
  $\kassoc{\rmS}{Y,X}{\rmZ'}$, this is by \Cref{eq:k-dep-dep} exactly
  $\kdep{\rmS}{Y,X}{\rmZ'}$. As $\rmZ'\subseteq\rmZ$, this establishes
  \Cref{eq:11}.
\end{proof}

\kdepinter*
\begin{proof}
  Assume $\kdep{\emptyset}{Y,X}{\rmZ}$. For the variable $X$, the
  inter-association clause is $\kassoc{\emptyset}{Y,X}{\rmZ}$, i.e.\
  $Y\dep{\pr}X\mid\rmZ$, which is exactly the first conjunct of
  \Cref{eq:k-dep-dep} and hence immediate.

  Now suppose, towards a contradiction, that the clause fails for some
  $Z\in\rmZ$, i.e.\ $Y \indep{\pr} Z \mid X \cup (\rmZ \setminus Z)$.
  Since $\rmZ\setminus Z$ is a proper subset of $\rmZ$ and the
  separating set is empty, the second conjunct of \Cref{eq:k-dep-dep}
  gives $Y \indep{\pr} X \mid \rmZ \setminus Z$. Applying the
  contraction and then weak union graphoid axioms of
  \Cref{def:graphoid},
  \begin{equation}
    \begin{aligned}
      &(Y \indep{\pr} Z \mid X \cup (\rmZ \setminus Z))
        \wedge (Y \indep{\pr} X \mid \rmZ \setminus Z)\\
      &\implies Y \indep{\pr} \{X, Z\} \mid \rmZ \setminus Z
      \implies Y \indep{\pr} X \mid \rmZ,
    \end{aligned}\label{eq:k-dep-inter-eq4}
  \end{equation}
  contradicting $Y\dep{\pr}X\mid\rmZ$. Hence
  $Y\dep{\pr}Z\mid X\cup(\rmZ\setminus Z)$ for every $Z\in\rmZ$, which
  together with $Y\dep{\pr}X\mid\rmZ$ is precisely
  $\kinter{\emptyset}{Y,\rmZ\cup X}$.
\end{proof}

\section{Correctness of {\method}}\label{apx:proofs-thm}
\citest*

\begin{proof}
  The function \textit{find\_cond} first iterates through all the
  subsets $\rmX'\in\powerleq{k-|\rmX|}{\rmS}$ and for each $\rmX'$ where
  $Y \dep{\pre} X \mid \rmX\cup\rmX'$ (line 9 in \textit{find\_cond}), it passes
  $\rmX'$ to the function \textit{no\_seps} (line 10 in
  \textit{find\_cond}) to determine if there are any subsets
  $\rmS'\in\powerleq{l}{\rmS\setminus\rmX'}$ that results in
  $Y \indep{\pre} X \mid \rmX\cup\rmX'\cup \rmS'$ (line 4 in \textit{no\_seps}).
  As soon as \textit{no\_seps} finds such a $\rmS'$, it returns
  \texttt{False} (line 4 in \textit{no\_seps}), and \textit{find\_cond}
  will need to go to the next iteration. Otherwise, if \textit{no\_seps}
  is unable to find any such subset $\rmS'$, it returns \texttt{True}
  (line 5 in \textit{no\_seps}), and \textit{find\_cond} then
  immediately returns $\rmZ=\rmX'\cup\rmX$ (line 11 in
  \textit{find\_cond}) as there are no $l$-bounded subsets
  $\rmS'\in\powerleq{l}{\rmS\setminus\rmZ}$ that render $Y$ and $X$
  conditionally independent given
  $\rmZ\cup\rmS'$. Here we use that the external set $\rmX$ is disjoint from
  $\rmS$, so $\rmS\setminus\rmZ=\rmS\setminus\rmX'$; and since
  $\emptyset\in\powerleq{l}{\rmS\setminus\rmX'}$, the dependence check
  $Y\dep{\pre}X\mid\rmZ$ on line 9 is precisely the $\rmS'=\emptyset$
  instance tested by \textit{no\_seps}, so a returned $\rmZ$ satisfies
  \Cref{eq:25} in full. However, if \textit{find\_cond} completes all its iteration
  without returning early, then we know that all subset $\rmX'$ has some
  $l$-bounded subset $\rmS'$ that will cause
  $Y\indep{\pre}X\mid\rmX'\cup\rmX\cup\rmS'$. Therefore
  \textit{find\_cond} returns the failure value $\bot$ in this case
  (line 12 in \textit{find\_cond}).
\end{proof}

\assocmine*
\begin{proof}
  First consider a call that returns a nonempty set.
  \textit{find\_inter\_dep} does so only by returning
  \textit{make\_strict}$(Y,X,\rmZ,\rmS,k,l)$ after a call
  \textit{find\_cond}$(Y,X,\rmX,\rmS,k,l)$ has returned some
  $\rmZ\neq\bot$, for some $X\in\verts\setminus(Y\cup\rmS\cup\rmX)$
  (as $\rmS$ is passed unchanged through every recursive call, any such
  $X$ satisfies $X\in\verts\setminus(Y\cup\rmS)=\verts'$, matching the
  theorem's candidate set). By
  \Cref{thm:citest}, $\rmZ=\rmX\cup\rmX'$ for some
  $\rmX'\in\powerleq{k-|\rmX|}{\rmS}$ such that
  $Y\dep{\pre}X\mid\rmZ\cup\rmS'$ holds for all
  $\rmS'\in\powerleq{l}{\rmS\setminus\rmX'}$. As the dependence set $\rmX$ is
  disjoint from $\rmS$, we have $\rmS\setminus\rmX'=\rmS\setminus\rmZ$, so
  $Y\dep{\pre}X\mid\rmZ\cup\rmS'$ holds for all
  $\rmS'\in\powerleq{l}{\rmS\setminus\rmZ}$ with
  $\rmZ\in\powerleq{k}{\verts\setminus\{Y,X\}}$---the estimated $l$-bounded
  association claimed in the theorem. Since
  \textit{make\_strict} only removes variables from $\rmZ$, the returned
  set is $X\cup\rmZ_{R}$ for some $\rmZ_{R}\subseteq\rmZ$, as claimed.

  Next consider a call that returns the empty set. Here every internal
  \textit{find\_cond} call must have failed, so \textit{find\_inter\_dep}
  traverses the full enumeration tree of the dependence sets
  $\rmX\in\powerleq{k}{\verts\setminus(Y\cup\rmS)}$ (lines 13--17 of
  \textit{find\_inter\_dep}), and for each visits every candidate
  $X\in\verts\setminus(Y\cup\rmS\cup\rmX)$ through
  \textit{find\_cond}$(Y,X,\rmX,\rmS,k,l)$, which by \Cref{thm:citest}
  searches all $\rmX'\in\powerleq{k-|\rmX|}{\rmS}$. Every conditioning
  set $\rmZ\in\powerleq{k}{\verts\setminus\{Y,X\}}$ decomposes uniquely
  into its part outside and its part inside the candidate blanket,
  $\rmZ=(\rmZ\setminus\rmS)\cup(\rmZ\cap\rmS)=\rmX\cup\rmX'$, so this
  enumeration covers every pair $(X,\rmZ)$ with $X\in\verts'$ and
  $\rmZ\in\powerleq{k}{\verts\setminus\{Y,X\}}$. By \Cref{thm:citest},
  the call for $(X,\rmX)$ returns $\bot$ exactly when every such $\rmX'$
  admits a separator $\rmS'\in\powerleq{l}{\rmS\setminus\rmX'}$ with
  $Y\indep{\pre}X\mid\rmX\cup\rmX'\cup\rmS'$. Therefore
  \textit{find\_inter\_dep} returns the empty set if and only if
  \Cref{eq:fid-empty} holds. The distinguished value $\bot$ lets the
  callers tell such a failure from a success that returns the empty
  conditioning set.
\end{proof}

\algkomb*
\begin{proof}
  Throughout we assume that the \gls{ci} decisions are accurate, so
  $\pre$ and $\pr$ agree on every tested statement.

  In the grow phase, every nonempty set returned by
  \textit{find\_inter\_dep} contains a variable $X\notin\rmS$ by
  \Cref{thm:assocmine}, so each iteration of the repeat loop strictly
  grows $\rmS$ and the phase terminates. Upon termination
  \textit{find\_inter\_dep} returned the empty set, so by
  \Cref{thm:assocmine} the condition in \Cref{eq:fid-empty} holds in
  $\pre$, and by accuracy in $\pr$. Fix any
  $X\in\verts\setminus(\rmS\cup Y)$. For every
  $\rmZ\in\powerleq{k}{\verts\setminus\{Y,X\}}$, \Cref{eq:fid-empty}
  supplies an $\rmS'\in\powerleq{l}{\rmS\setminus\rmZ}
  \subseteq\powerleq{l}{\rmS}$ with $Y\indep{\pr}X\mid\rmZ\cup\rmS'$;
  the $l$-bounded separator assumption (\Cref{def:bound-sep}) then
  lifts this, for every such $\rmZ$, to the unbounded statement that
  some $\rmS'\subseteq\rmS$ gives $Y\indep{\pr}X\mid\rmZ\cup\rmS'$,
  which is the antecedent of $k$-order faithfulness
  (\Cref{eq:k-faith-indep}). Hence $Y\indep{\gr}X\mid\rmS$ for every
  $X\in\verts\setminus(\rmS\cup Y)$. Since the parents and children of
  $Y$---and, for an undirected $\gr$, its neighbours---can never be
  separated from $Y$, they must all lie in $\rmS$. Any spouse $X$
  shares a child $C\in\text{ch}(Y)\subseteq\rmS$, so the collider path
  $Y\rightarrow C\leftarrow X$ is active given $\rmS$; hence
  $Y\dep{\gr}X\mid\rmS$, which forces $X\in\rmS$ as well. Therefore
  $\mbs{\gr}(Y)\subseteq\rmS$, establishing \Cref{eq:27}.

  During the shrink phase we maintain the invariant
  $\mbs{\gr}(Y)\subseteq\rmS$, which holds on entry by \Cref{eq:27}.
  First, we show that no true member is ever removed. For
  $X\in\mbs{\gr}(Y)$, the in-blanket witness hypothesis of
  \Cref{thm:algkomb} supplies a set
  $\rmZ\in\powerleq{k}{\mbs{\gr}(Y)\setminus X}
  \subseteq\powerleq{k}{\rmS\setminus X}$ such that
  $Y\dep{\pr}X\mid\rmZ\cup\rmS'$ holds for all
  $\rmS'\in\powerleq{l}{(\rmS\setminus X)\setminus\rmZ}$. By accuracy and \Cref{thm:citest}
  (with candidate blanket $\rmS\setminus X$, external dependence set
  $\rmX=\emptyset$, and $\rmX'=\rmZ$), the call
  \textit{find\_cond}$(Y,X,\emptyset,\rmS\setminus X,k,l)$ does not
  return $\bot$, so $X$ is never selected for removal (a member whose
  only witness is $\rmZ=\emptyset$ makes \textit{find\_cond} return the
  empty set rather than $\bot$).

  Next, we show that every non-member is removed once selected. Let
  $X\in\rmS\setminus\mbs{\gr}(Y)$; by the invariant
  $\mbs{\gr}(Y)\subseteq\rmS\setminus X$. We first observe that
  $\mbs{\gr}(Y)$ separates $Y$ from $X$ even after conditioning on any
  further set $\rmZ\subseteq\verts\setminus\{Y,X\}$. Since
  $X\notin\mbs{\gr}(Y)$, $X$ is not adjacent to $Y$, so every path
  between them has at least one interior vertex; write such a path as
  $Y=N_{0}\sim N_{1}\sim\cdots\sim N_{m}=X$ with $m\geq 2$. Its first
  interior vertex $N_{1}$ is adjacent to $Y$ and hence lies in
  $\mbs{\gr}(Y)$. If $N_{1}$ is a non-collider on the path, it blocks
  the path. If instead $N_{1}$ is a collider
  $Y\rightarrow N_{1}\leftarrow N_{2}$, then $N_{1}$ is a common child of
  $Y$ and its successor $N_{2}$ on the path, so $N_{2}$ is a spouse of
  $Y$ and hence $N_{2}\in\mbs{\gr}(Y)$. In particular $N_{2}\neq X$ (as
  $X\notin\mbs{\gr}(Y)$), so $N_{2}$ is a genuine interior vertex; and
  since the edge $N_{2}\rightarrow N_{1}$ points out of $N_{2}$, it is a
  non-collider on the path and blocks it. For an undirected $\gr$, $N_{1}$ is instead a neighbour of
  $Y$ and blocks the path directly. As each path is thus blocked at a
  conditioned vertex of $\mbs{\gr}(Y)$ irrespective of
  $\rmZ$~\citep{pearl1988,lauritzen1996},
  $Y\indep{\gr}X\mid\mbs{\gr}(Y)\cup\rmZ$ for every
  $\rmZ\in\powerleq{k}{\verts\setminus\{Y,X\}}$.

  The Global Markov
  Property transfers this to $\pr$ with the separator
  $\rmS'\coloneq\mbs{\gr}(Y)\setminus\rmZ\subseteq\rmS\setminus X$, so
  the unbounded side of \Cref{def:bound-sep}---at the separating set
  $\rmS\setminus X$---holds for every such $\rmZ$. The biconditional
  then yields some $\rmS'\in\powerleq{l}{\rmS\setminus X}$; discarding
  any overlap with $\rmZ$ leaves $\rmZ\cup\rmS'$ unchanged and gives
  $\rmS'\in\powerleq{l}{(\rmS\setminus X)\setminus\rmZ}$ with
  $Y\indep{\pr}X\mid\rmZ\cup\rmS'$. By accuracy and \Cref{thm:citest},
  \textit{find\_cond}$(Y,X,\emptyset,\rmS\setminus X,k,l)$ returns
  $\bot$, so $X$ is removed once selected.

  Each iteration of the while loop removes one variable, so the phase
  terminates; it exits only when no $X\in\rmS$ yields $\bot$, whence
  $\rmS\setminus\mbs{\gr}(Y)=\emptyset$ and, with the invariant,
  $\rmS=\mbs{\gr}(Y)$, establishing \Cref{eq:28}.
\end{proof}

\section{Complexity of {\method}}\label{apx:proof-complex}

\kombcomplexity*
\begin{proof}
  We charge $O(N)$ to each \gls{ci} test and bound the running time by
  analysing the three sub-algorithms of \Cref{sec:algo} from the
  innermost outwards.

  First consider \textit{find\_cond} (\Cref{alg:ci-test}) for a fixed
  candidate variable $X$ and external dependence set $\rmX$ with
  $|\rmX|=i$. It iterates over every subset
  $\rmX'\in\powerleq{k-i}{\rmS}$, of which there are at most
  $\sum_{i'=0}^{k-i}\binom{K}{i'}$ since $|\rmS|\leq K$. For each $\rmX'$
  it performs one \gls{ci} test and, whenever that test indicates a
  dependence, calls \textit{no\_seps}, which iterates over every
  separator $\rmS'\in\powerleq{l}{\rmS\setminus\rmX'}$ and performs one
  further \gls{ci} test per separator. Writing $|\rmX'|=i'$, there are at
  most $\sum_{j=0}^{l}\binom{K-i'}{j}$ such separators, so a single
  call to \textit{find\_cond} costs
  \begin{equation}
    \label{eq:cost-findcond}
    O\!\left(
      \sum_{i'=0}^{k-i}\binom{K}{i'}
      \sum_{j=0}^{l}\binom{K-i'}{j}\, N
    \right),
  \end{equation}
  where the convention $\binom{n}{m}=0$ for $n<m$ accounts for the
  separator set $\rmS\setminus\rmX'$ being exhausted.

  Next consider \textit{find\_inter\_dep} (\Cref{alg:assoc-mine}). It
  explores the enumeration tree of $\verts\setminus Y$ up to subsets of
  cardinality $k$ (lines 13--17 of \textit{find\_inter\_dep}), and hence
  visits at most $\sum_{i=0}^{k}\binom{n}{i}$ dependence sets
  $\rmX\in\powerleq{k}{\verts\setminus Y}$ (a loose bound on the exact
  count $\sum_{i=0}^{k}\binom{n-1}{i}$, as $|\verts\setminus Y|=n-1$). For every such $\rmX$ it
  loops over the $O(n)$ candidate variables
  $X\in\verts\setminus(Y\cup\rmS\cup\rmX)$ and invokes
  \textit{find\_cond}. Multiplying these two factors by
  \Cref{eq:cost-findcond} bounds the cost of a single association-mining
  sweep by
  \begin{equation}
    \label{eq:cost-sweep}
    O\!\left(
      n \sum_{i=0}^{k}\binom{n}{i}
      \left[
        \sum_{i'=0}^{k-i}\binom{K}{i'}
        \sum_{j=0}^{l}\binom{K-i'}{j}\, N
      \right]
    \right).
  \end{equation}
  Each successful sweep additionally invokes \textit{make\_strict}
  exactly once; it iterates over the found dependence set $\rmZ$---of
  size at most $\min(k,n-2)$, since $\rmZ\subseteq\verts\setminus\{Y,X\}$
  and $|\rmZ|\leq k$---and performs a single \textit{no\_seps} call per
  element, at cost
  $O\!\left(\min(k,n-2)\sum_{j=0}^{l}\binom{K}{j}\,N\right)$. This is dominated by
  the per-sweep \textit{find\_cond} total already counted in
  \Cref{eq:cost-sweep}---whose bracketed factor already contains the
  term $\sum_{j=0}^{l}\binom{K}{j}$ (at $i'=0$) multiplied by the outer
  factor $n\sum_{i=0}^{k}\binom{n}{i}\geq n\geq\min(k,n-2)$---and is
  therefore absorbed into the bound.

  Finally, consider {\method} itself (\Cref{alg:komb}). Each successful
  call to \textit{find\_inter\_dep} in the grow phase returns a
  nonempty set $X\cup\rmZ$ that is added to $\rmS$, so the candidate
  Markov blanket grows by at least one variable per successful call; as
  $|\rmS|\leq K$ throughout, the grow phase performs at most $O(K)$
  association-mining sweeps. The shrink phase re-evaluates its guard after
  each removal, calling \textit{find\_cond} at most $O(K^{2})$ times over the
  at most $K$ removals; as $K\leq n-1$, this cost is dominated by that of the
  grow phase. Multiplying \Cref{eq:cost-sweep}
  by these $O(K)$ sweeps yields the bound in \Cref{eq:complexity},
  completing the proof.
\end{proof}

\section{Boolean Functions for the Synthetic Bayesian Networks}\label{apx:bool-func}
\begin{description}[align=right,labelwidth=7em]
\item[parity:]     
  $f(X_{1},\ldots,X_{3}) = (\sum_{i=1}^{3}X_{i}) \pmod{2}$
\item[ex-$s$:] 
  $f(X_{1},\ldots,X_{3}; s) = \bigl[(\sum_{i=1}^{3}X_{i}) = s\bigr]$
\item[and:]       
  $f(X_{1},\ldots,X_{3}) = \bigl[(\sum_{i=1}^{3}X_{i}) = 3\bigr]$
\item[or:]         
  $f(X_{1},\ldots,X_{3}) = \bigl[(\sum_{i=1}^{3}X_{i}) \geq 1\bigr]$
\end{description}
where $s\in\{1,2\}$.

\section{Separator Sizes in the Benchmark Datasets}\label{apx:dsep}
To gauge how restrictive the $l$-bounded separator assumption
(\Cref{def:bound-sep}) is in practice, we examine the benchmark networks
of \Cref{sec:exp-real}. We use every variable in turn as a target and,
for every pair formed by a target and a variable outside its Markov
blanket, compute a minimal d-separator drawn from the true Markov
blanket. \Cref{tab:dsep} reports the mean and worst-case (maximum)
separator size and the fraction of pairs whose separator has size at
most three. Separators are small across all four networks---the mean
never exceeds $2.19$ and size-$\leq\!3$ separators cover at least $83\%$
of pairs---and no pair had to be skipped, i.e.\ a separator within the
true Markov blanket always existed. The bound is loosest on Barley,
where the largest separator reaches size $9$; this is also the network
on which {\method} is the most expensive and least accurate
(\Cref{sec:exp-real}), illustrating that the difficulty of \gls{mb}
discovery tracks the separator sizes the assumption must accommodate.
Overall, a modest separator bound $l$ already suffices to capture most
of the conditional independencies needed to recover the Markov blanket
on these networks.

\begin{table}[h]
  \centering
  \caption{Minimal d-separator sizes restricted to the true Markov
    blanket, computed using every variable of each benchmark network as
    a target. ``Targets'' is the number of targets (all variables) and
    ``Pairs'' the number of target--non-blanket pairs; ``Mean sep.\ size''
    and ``Max'' are the mean and maximum separator size; and
    ``Cover $\leq 3$'' is the fraction whose minimal separator has size
    at most three.}
  \label{tab:dsep}
  \resizebox{\columnwidth}{!}{\begin{tabular}{llllll}
\toprule
Network & Targets & Pairs & Mean sep.\ size & Max & Cover $\leq 3$ \\
\midrule
Alarm1 & 37 & 1202 & 0.621 & 3 & 1.000 \\
Barley & 48 & 2004 & 1.739 & 9 & 0.835 \\
Insurance & 27 & 562 & 2.190 & 7 & 0.925 \\
Mildew & 35 & 1030 & 1.065 & 6 & 0.976 \\
\bottomrule
\end{tabular}
}
\end{table}

\section{Ablation}\label{apx:ablation}
For completeness, we report the full results of the experiments in
\Cref{sec:exp-real} for every method and every $(k,l)$ configuration of
{\method}. \Cref{tab:real-full} gives the performance of all methods,
while \Cref{tab:runtime-full} gives their runtimes in seconds. In both
tables, ``--'' denotes a configuration that failed to complete within the
allotted time budget.

\Cref{tab:kl-ablation} summarises the F1 score of {\method} across all
synthetic and benchmark datasets for every $(k,l)$ configuration. F1
generally improves from $k=0$ to $k=2$---higher orders expose
dependencies that lower orders miss---but the gains flatten or reverse
beyond $k=2$ as the larger conditioning sets incur both higher runtime
(\Cref{tab:runtime-full}) and more error-prone \gls{ci} tests. Setting
$l=k$ with $k\leq 2$ therefore offers a robust default, the heuristic we
adopt in \Cref{sec:complexity}. For a fixed $k$, raising $l$ beyond $k$
does not consistently improve F1 (\Cref{tab:kl-ablation}), which
motivates simply setting $l=k$ rather than tuning $l$ separately.

\begin{table*}[h]
  \centering
  \caption{F1 score across datasets for each $(k,l)$ configuration of {\method}.}
  \label{tab:kl-ablation}
  \setlength{\tabcolsep}{5pt}
\begin{tabular}{lccccccccc}
\toprule
Method & parity & ex-1 & ex-2 & and & or & Alarm1 & Barley & Insurance & Mildew \\
\midrule

kOMB
& 0.0250 & 0.2825 & 0.2675 & 0.4100 & 0.3500 & -- & -- & -- & -- \\
{\scriptsize $(k=0,l=1)$} & {\scriptsize $\pm$0.1104} & {\scriptsize $\pm$0.3720} & {\scriptsize $\pm$0.3737} & {\scriptsize $\pm$0.4050} & {\scriptsize $\pm$0.3508} & -- & -- & -- & -- \\

kOMB
& 0.0250 & 0.2825 & 0.2675 & 0.4100 & 0.3500 & -- & -- & -- & -- \\
{\scriptsize $(k=0,l=2)$} & {\scriptsize $\pm$0.1104} & {\scriptsize $\pm$0.3720} & {\scriptsize $\pm$0.3737} & {\scriptsize $\pm$0.4050} & {\scriptsize $\pm$0.3508} & -- & -- & -- & -- \\

kOMB
& 0.0250 & 0.2825 & 0.2675 & 0.4100 & 0.3500 & -- & -- & -- & -- \\
{\scriptsize $(k=0,l=3)$} & {\scriptsize $\pm$0.1104} & {\scriptsize $\pm$0.3720} & {\scriptsize $\pm$0.3737} & {\scriptsize $\pm$0.4050} & {\scriptsize $\pm$0.3508} & -- & -- & -- & -- \\

kOMB
& 0.0500 & 0.6750 & 0.6750 & \textbf{0.6175} & 0.5600 & 0.7804 & \textbf{0.5158} & 0.7136 & \textbf{0.6220} \\
{\scriptsize $(k=1,l=1)$} & {\scriptsize $\pm$0.2207} & {\scriptsize $\pm$0.4743} & {\scriptsize $\pm$0.4743} & {\scriptsize $\pm$0.3741} & {\scriptsize $\pm$0.4349} & {\scriptsize $\pm$0.1145} & {\scriptsize $\pm$0.1964} & {\scriptsize $\pm$0.1328} & {\scriptsize $\pm$0.1595} \\

kOMB
& 0.0500 & 0.6750 & 0.6750 & \textbf{0.6175} & 0.5600 & 0.8187 & 0.3128 & \textbf{0.7336} & 0.4360 \\
{\scriptsize $(k=1,l=2)$} & {\scriptsize $\pm$0.2207} & {\scriptsize $\pm$0.4743} & {\scriptsize $\pm$0.4743} & {\scriptsize $\pm$0.3741} & {\scriptsize $\pm$0.4349} & {\scriptsize $\pm$0.1944} & {\scriptsize $\pm$0.1582} & {\scriptsize $\pm$0.1624} & {\scriptsize $\pm$0.1394} \\

kOMB
& 0.0500 & 0.6750 & 0.6750 & \textbf{0.6175} & 0.5600 & 0.7923 & 0.2498 & 0.7076 & 0.3726 \\
{\scriptsize $(k=1,l=3)$} & {\scriptsize $\pm$0.2207} & {\scriptsize $\pm$0.4743} & {\scriptsize $\pm$0.4743} & {\scriptsize $\pm$0.3741} & {\scriptsize $\pm$0.4349} & {\scriptsize $\pm$0.1885} & {\scriptsize $\pm$0.1140} & {\scriptsize $\pm$0.1304} & {\scriptsize $\pm$0.1558} \\

kOMB
& \textbf{1.0000} & \textbf{1.0000} & \textbf{1.0000} & \textbf{0.6175} & \textbf{0.7100} & 0.7775 & 0.4634 & 0.7249 & 0.5059 \\
{\scriptsize $(k=2,l=1)$} & {\scriptsize $\pm$0.0000} & {\scriptsize $\pm$0.0000} & {\scriptsize $\pm$0.0000} & {\scriptsize $\pm$0.3741} & {\scriptsize $\pm$0.3842} & {\scriptsize $\pm$0.1511} & {\scriptsize $\pm$0.1774} & {\scriptsize $\pm$0.1464} & {\scriptsize $\pm$0.1458} \\

kOMB
& \textbf{1.0000} & \textbf{1.0000} & \textbf{1.0000} & \textbf{0.6175} & \textbf{0.7100} & \textbf{0.8209} & 0.3128 & \textbf{0.7336} & 0.4360 \\
{\scriptsize $(k=2,l=2)$} & {\scriptsize $\pm$0.0000} & {\scriptsize $\pm$0.0000} & {\scriptsize $\pm$0.0000} & {\scriptsize $\pm$0.3741} & {\scriptsize $\pm$0.3842} & {\scriptsize $\pm$0.1914} & {\scriptsize $\pm$0.1582} & {\scriptsize $\pm$0.1624} & {\scriptsize $\pm$0.1394} \\

kOMB
& \textbf{1.0000} & \textbf{1.0000} & \textbf{1.0000} & \textbf{0.6175} & \textbf{0.7100} & 0.7967 & 0.2498 & 0.7058 & 0.3726 \\
{\scriptsize $(k=2,l=3)$} & {\scriptsize $\pm$0.0000} & {\scriptsize $\pm$0.0000} & {\scriptsize $\pm$0.0000} & {\scriptsize $\pm$0.3741} & {\scriptsize $\pm$0.3842} & {\scriptsize $\pm$0.1828} & {\scriptsize $\pm$0.1140} & {\scriptsize $\pm$0.1311} & {\scriptsize $\pm$0.1558} \\

kOMB
& -- & -- & -- & -- & -- & 0.7721 & -- & 0.7227 & 0.5243 \\
{\scriptsize $(k=3,l=1)$} & -- & -- & -- & -- & -- & {\scriptsize $\pm$0.1599} & -- & {\scriptsize $\pm$0.1552} & {\scriptsize $\pm$0.1075} \\

kOMB
& -- & -- & -- & -- & -- & 0.8001 & -- & 0.7288 & 0.4194 \\
{\scriptsize $(k=3,l=2)$} & -- & -- & -- & -- & -- & {\scriptsize $\pm$0.1948} & -- & {\scriptsize $\pm$0.1471} & {\scriptsize $\pm$0.1414} \\

kOMB
& -- & -- & -- & -- & -- & 0.7967 & -- & 0.7058 & 0.3726 \\
{\scriptsize $(k=3,l=3)$} & -- & -- & -- & -- & -- & {\scriptsize $\pm$0.1828} & -- & {\scriptsize $\pm$0.1311} & {\scriptsize $\pm$0.1558} \\
\bottomrule
\end{tabular}

\end{table*}

\begin{table}[h]
  \centering
  \caption{Full results on benchmark datasets.}
  \label{tab:real-full}
  \setlength{\tabcolsep}{5pt}
\begin{tabular}{lcccc}
\toprule
Method & Alarm1 & Barley & Insurance & Mildew \\
\midrule

BAMB
& 0.6742 & 0.3385 & 0.6588 & 0.4945 \\
& {\scriptsize $\pm$0.1771} & {\scriptsize $\pm$0.1226} & {\scriptsize $\pm$0.0797} & {\scriptsize $\pm$0.1084} \\

GS
& 0.3434 & 0.2113 & 0.4615 & 0.2867 \\
& {\scriptsize $\pm$0.1503} & {\scriptsize $\pm$0.0807} & {\scriptsize $\pm$0.1477} & {\scriptsize $\pm$0.1581} \\

HITON\_MB
& 0.7636 & 0.3400 & 0.6746 & 0.4918 \\
& {\scriptsize $\pm$0.1861} & {\scriptsize $\pm$0.1284} & {\scriptsize $\pm$0.1594} & {\scriptsize $\pm$0.1487} \\

IAMB
& 0.6931 & 0.3373 & 0.6293 & 0.4677 \\
& {\scriptsize $\pm$0.1827} & {\scriptsize $\pm$0.1223} & {\scriptsize $\pm$0.0778} & {\scriptsize $\pm$0.1224} \\

LRH
& 0.6813 & 0.3373 & 0.6281 & 0.4674 \\
& {\scriptsize $\pm$0.1777} & {\scriptsize $\pm$0.1223} & {\scriptsize $\pm$0.0782} & {\scriptsize $\pm$0.1230} \\

MMMB
& 0.7692 & 0.3298 & 0.6978 & 0.4861 \\
& {\scriptsize $\pm$0.1874} & {\scriptsize $\pm$0.1274} & {\scriptsize $\pm$0.1405} & {\scriptsize $\pm$0.1810} \\

PCMB
& 0.7371 & 0.2284 & 0.6299 & 0.4156 \\
& {\scriptsize $\pm$0.2127} & {\scriptsize $\pm$0.1199} & {\scriptsize $\pm$0.1291} & {\scriptsize $\pm$0.1445} \\

STMB
& 0.6598 & 0.2145 & 0.5648 & 0.2734 \\
& {\scriptsize $\pm$0.1627} & {\scriptsize $\pm$0.1281} & {\scriptsize $\pm$0.1324} & {\scriptsize $\pm$0.1765} \\

kOMB
& 0.7804 & \textbf{0.5158} & 0.7136 & \textbf{0.6220} \\
{\scriptsize $(k=1,l=1)$} & {\scriptsize $\pm$0.1145} & {\scriptsize $\pm$0.1964} & {\scriptsize $\pm$0.1328} & {\scriptsize $\pm$0.1595} \\

kOMB
& 0.8187 & 0.3128 & \textbf{0.7336} & 0.4360 \\
{\scriptsize $(k=1,l=2)$} & {\scriptsize $\pm$0.1944} & {\scriptsize $\pm$0.1582} & {\scriptsize $\pm$0.1624} & {\scriptsize $\pm$0.1394} \\

kOMB
& 0.7923 & 0.2498 & 0.7076 & 0.3726 \\
{\scriptsize $(k=1,l=3)$} & {\scriptsize $\pm$0.1885} & {\scriptsize $\pm$0.1140} & {\scriptsize $\pm$0.1304} & {\scriptsize $\pm$0.1558} \\

kOMB
& 0.7775 & 0.4588 & 0.7249 & 0.5059 \\
{\scriptsize $(k=2,l=1)$} & {\scriptsize $\pm$0.1511} & {\scriptsize $\pm$0.1825} & {\scriptsize $\pm$0.1464} & {\scriptsize $\pm$0.1458} \\

kOMB
& \textbf{0.8209} & 0.3128 & \textbf{0.7336} & 0.4360 \\
{\scriptsize $(k=2,l=2)$} & {\scriptsize $\pm$0.1914} & {\scriptsize $\pm$0.1582} & {\scriptsize $\pm$0.1624} & {\scriptsize $\pm$0.1394} \\

kOMB
& 0.7967 & 0.2498 & 0.7058 & 0.3726 \\
{\scriptsize $(k=2,l=3)$} & {\scriptsize $\pm$0.1828} & {\scriptsize $\pm$0.1140} & {\scriptsize $\pm$0.1311} & {\scriptsize $\pm$0.1558} \\

kOMB
& 0.7721 & 0.0000 & 0.7227 & 0.0419 \\
{\scriptsize $(k=3,l=1)$} & {\scriptsize $\pm$0.1599} & {\scriptsize $\pm$0.0000} & {\scriptsize $\pm$0.1552} & {\scriptsize $\pm$0.1458} \\

kOMB
& 0.8001 & 0.0000 & 0.7288 & 0.3607 \\
{\scriptsize $(k=3,l=2)$} & {\scriptsize $\pm$0.1948} & {\scriptsize $\pm$0.0000} & {\scriptsize $\pm$0.1471} & {\scriptsize $\pm$0.1964} \\

kOMB
& 0.7967 & 0.0000 & 0.7058 & 0.3726 \\
{\scriptsize $(k=3,l=3)$} & {\scriptsize $\pm$0.1828} & {\scriptsize $\pm$0.0000} & {\scriptsize $\pm$0.1311} & {\scriptsize $\pm$0.1558} \\
\bottomrule
\end{tabular}

\end{table}

\begin{table}[h]
  \centering
  \caption{Full runtimes (in seconds) on benchmark datasets.}
  \label{tab:runtime-full}
  \setlength{\tabcolsep}{5pt}
\begin{tabular}{lcccc}
\toprule
Method & Alarm1 & Barley & Insurance & Mildew \\
\midrule

BAMB
& 3.517 & 1.149 & 5.615 & 0.797 \\
& {\scriptsize $\pm$4.305} & {\scriptsize $\pm$0.681} & {\scriptsize $\pm$5.358} & {\scriptsize $\pm$0.392} \\

GS
& 4.875 & 1.575 & 4.433 & 1.035 \\
& {\scriptsize $\pm$5.183} & {\scriptsize $\pm$0.606} & {\scriptsize $\pm$3.792} & {\scriptsize $\pm$0.341} \\

HITON\_MB
& 2.560 & 1.027 & 3.514 & 0.604 \\
& {\scriptsize $\pm$1.277} & {\scriptsize $\pm$0.298} & {\scriptsize $\pm$2.525} & {\scriptsize $\pm$0.218} \\

IAMB
& 2.527 & 1.104 & 1.650 & 0.943 \\
& {\scriptsize $\pm$1.710} & {\scriptsize $\pm$0.496} & {\scriptsize $\pm$0.967} & {\scriptsize $\pm$0.420} \\

LRH
& 89.877 & 5.071 & 45.401 & 1.833 \\
& {\scriptsize $\pm$93.285} & {\scriptsize $\pm$2.496} & {\scriptsize $\pm$24.653} & {\scriptsize $\pm$0.651} \\

MMMB
& 2.724 & 1.090 & 3.508 & 0.553 \\
& {\scriptsize $\pm$1.278} & {\scriptsize $\pm$0.374} & {\scriptsize $\pm$2.525} & {\scriptsize $\pm$0.128} \\

PCMB
& 15.515 & 4.982 & 27.213 & 3.066 \\
& {\scriptsize $\pm$10.203} & {\scriptsize $\pm$1.872} & {\scriptsize $\pm$21.695} & {\scriptsize $\pm$1.486} \\

STMB
& 5.305 & 1.895 & 14.658 & 0.561 \\
& {\scriptsize $\pm$5.855} & {\scriptsize $\pm$2.367} & {\scriptsize $\pm$33.396} & {\scriptsize $\pm$0.524} \\

kOMB
& 5.258 & 58.190 & 4.856 & 17.698 \\
{\scriptsize $(k=1,l=1)$} & {\scriptsize $\pm$2.921} & {\scriptsize $\pm$31.619} & {\scriptsize $\pm$2.041} & {\scriptsize $\pm$10.205} \\

kOMB
& 3.974 & 12.369 & 3.828 & 7.562 \\
{\scriptsize $(k=1,l=2)$} & {\scriptsize $\pm$2.468} & {\scriptsize $\pm$7.709} & {\scriptsize $\pm$2.235} & {\scriptsize $\pm$2.301} \\

kOMB
& 4.111 & 9.639 & 3.171 & 7.384 \\
{\scriptsize $(k=1,l=3)$} & {\scriptsize $\pm$2.791} & {\scriptsize $\pm$3.794} & {\scriptsize $\pm$1.707} & {\scriptsize $\pm$2.127} \\

kOMB
& 26.092 & 788.531 & 19.901 & 311.324 \\
{\scriptsize $(k=2,l=1)$} & {\scriptsize $\pm$14.668} & {\scriptsize $\pm$453.059} & {\scriptsize $\pm$7.226} & {\scriptsize $\pm$167.079} \\

kOMB
& 18.144 & 335.821 & 14.493 & 121.909 \\
{\scriptsize $(k=2,l=2)$} & {\scriptsize $\pm$7.566} & {\scriptsize $\pm$135.966} & {\scriptsize $\pm$7.280} & {\scriptsize $\pm$33.153} \\

kOMB
& 18.016 & 290.512 & 10.424 & 121.146 \\
{\scriptsize $(k=2,l=3)$} & {\scriptsize $\pm$9.135} & {\scriptsize $\pm$85.391} & {\scriptsize $\pm$4.290} & {\scriptsize $\pm$32.239} \\

kOMB
& 174.900 & 1800.000 & 96.149 & 1761.246 \\
{\scriptsize $(k=3,l=1)$} & {\scriptsize $\pm$83.504} & {\scriptsize $\pm$0.000} & {\scriptsize $\pm$46.913} & {\scriptsize $\pm$188.357} \\

kOMB
& 132.619 & 1800.000 & 58.350 & 1534.465 \\
{\scriptsize $(k=3,l=2)$} & {\scriptsize $\pm$46.566} & {\scriptsize $\pm$0.000} & {\scriptsize $\pm$21.745} & {\scriptsize $\pm$300.944} \\

kOMB
& 127.861 & 1800.000 & 53.452 & 1356.514 \\
{\scriptsize $(k=3,l=3)$} & {\scriptsize $\pm$49.652} & {\scriptsize $\pm$0.000} & {\scriptsize $\pm$18.285} & {\scriptsize $\pm$304.778} \\
\bottomrule
\end{tabular}

\end{table}

\end{document}